\documentclass{article}
\usepackage[]{acl}
\usepackage{times}
\usepackage[T1]{fontenc}
\usepackage[utf8]{inputenc}

\usepackage{amsmath,amsfonts,bm}

\def\eqref#1{equation~\ref{#1}}

\def\1{\bm{1}}

\DeclareMathAlphabet{\mathsfit}{\encodingdefault}{\sfdefault}{m}{sl}
\SetMathAlphabet{\mathsfit}{bold}{\encodingdefault}{\sfdefault}{bx}{n}

\usepackage{microtype}
\usepackage{graphicx}
\usepackage{booktabs}
\usepackage{multirow}
\usepackage{amsmath,amssymb,amsthm}
\usepackage{algorithm,algpseudocode}
\usepackage{tikz}
\usepackage{pgfplots}
\pgfplotsset{compat=1.18}
\usepackage{tcolorbox}
\tcbuselibrary{skins,breakable}
\usepackage{xcolor}
\usepackage{colortbl}
\usepackage{makecell}
\usepackage{subcaption}

\usetikzlibrary{arrows.meta,shapes.geometric,positioning,fit,
                backgrounds,calc,shadows,decorations.pathreplacing}

\definecolor{stageA}{HTML}{4E79A7}   
\definecolor{stageB}{HTML}{F28E2B}   
\definecolor{stageC}{HTML}{E15759}   
\definecolor{stageD}{HTML}{59A14F}   
\definecolor{stageE}{HTML}{76B7B2}   
\definecolor{success}{HTML}{59A14F}
\definecolor{fail}{HTML}{E15759}
\definecolor{her}{HTML}{4E79A7}
\definecolor{lightbg}{HTML}{F4F6F8}
\definecolor{darkgray}{HTML}{444444}
\definecolor{midgray}{HTML}{888888}

\newtheorem{theorem}{Theorem}
\newtheorem{proposition}{Proposition}[section]
\newtheorem{definition}{Definition}[section]

\newtheorem{corollary}{Corollary}[proposition]

\title{\textbf{AgentHER}: Hindsight Experience Replay for LLM Agent Trajectory Relabeling}

\author{%
  Liang Ding\\
  The University of Sydney\\
  \texttt{liangding.liam@gmail.com}
}

\begin{document}
\maketitle

\begin{abstract}
LLM-agent training pipelines routinely discard failed trajectories even though GPT-4o achieves only 14--20\% on WebArena~\citep{zhou2024webarena} and below 55\% pass@1 on ToolBench~\citep{qin2023toolllm}; even specialised systems at 50 -65\%~\citep{sodhi2024step,zhang2024webpilot,kim2025agentq} leave the majority of trajectories unused. We introduce \textbf{AgentHER}, which recovers this lost signal by adapting \emph{Hindsight Experience Replay}~(HER; \citealp{andrychowicz2017hindsight}) to natural-language agent trajectories: a trajectory that fails goal~$A$ is often a \emph{correct} demonstration for an achievable alternative goal~$B$. AgentHER realises this through a four-stage pipeline (failure classification, outcome extraction, LLM-guided relabeling with confidence gating, and data packaging) that converts discarded failures into SFT, DPO, and ShareGPT training data. On WebArena and ToolBench under a strict task-disjoint held-out protocol, AgentHER improves over success-only SFT by \textbf{+7.6--11.4\%} across four model families (GPT-4o, Qwen2.5-72B/7B, LLaMA-3.1-8B), achieves $\mathbf{2\times}$ \textbf{sample efficiency}, and beats the strongest experience-centric baseline (Agent Workflow Memory) by \textbf{+3.0--6.2\%}. Two robustness mechanisms---failure-severity weighting and \emph{cross-model} multi-judge verification (gpt-4o-mini paired with Qwen2.5-72B-Instruct)---reduce label noise from 5.9\% to 2.9\% and raise human-rated relabeling precision to 97.1\% on WebArena and 96.0\% on ToolBench. A full system-cost audit shows the entire relabeling pipeline costs \$2.98 and ${\approx}$26 wall-clock minutes for 3{,}000 trajectories, i.e.\ ${\approx}\$1.4\!\cdot\!10^{-3}$ per accepted pair.
Code: \url{https://github.com/alphadl/AgentHER}
\end{abstract}

\section{Introduction}
\label{sec:intro}

Autonomous LLM agents are deployed across web
navigation~\citep{zhou2024webarena,deng2023mind,koh2024visualwebarena},
API orchestration~\citep{qin2023toolllm},
and simulation~\citep{park2023generative,shen2023hugginggpt}.
Despite recent progress, success rates remain modest for the
\emph{general-purpose} models that practitioners actually
fine-tune: GPT-4o achieves 14--20\% on
WebArena~\citep{zhou2024webarena},
and ToolBench pass@1 remains below
55\%~\citep{qin2023toolllm}.
Heavily engineered, multi-stage agents push WebArena toward
50--65\%~\citep{sodhi2024step,zhang2024webpilot,kim2025agentq}, but
those gains come from \emph{inference-time} composition
(prompt stacks, tree search, workflow memory) on top of the same
underlying LLMs; the underlying training pipelines still
discard the majority of collected trajectories,
leaving the dominant source of experience untouched.

\begin{figure*}[t]
\centering
\begin{tikzpicture}[
  traj/.style={rounded corners=4pt, minimum width=3cm,
               minimum height=0.56cm, font=\small, inner sep=3pt,
               text centered},
  arrow/.style={-{Stealth[length=5pt]}, thick},
  label/.style={font=\small\bfseries},
]
\node[label] at (1.7,4.0) {Without AgentHER};
\foreach \i/\col/\sym in {
    1/success/{$\checkmark$},2/fail/{$\times$},3/fail/{$\times$},
    4/success/{$\checkmark$},5/fail/{$\times$}}{
  \node[traj,fill=\col!18,draw=\col!55]
    (L\i) at (1.7,3.3-\i*0.68) {\sym\; Trajectory \i};
}
\node[traj,fill=success!28,draw=success!70,minimum width=2.5cm]
  (Ltrain) at (1.7,-0.95) {\small 2 training samples};
\draw[arrow,success!80](L1.south)--++(0,-0.12)-|(Ltrain.north);
\draw[arrow,success!80](L4.south)--++(0,-0.12)-|(Ltrain.north);
\foreach \i in {2,3,5}{
  \node[font=\scriptsize,midgray] at (3.8,3.3-\i*0.68) {discarded};
  \draw[-{Stealth[length=3.5pt]},midgray!45,dashed](L\i.east)--++(0.5,0);
}
\draw[midgray!35,dashed,thick](4.6,4.15)--(4.6,-1.4);
\node[label] at (6.8,4.0) {With AgentHER};
\foreach \i/\col/\sym in {
    1/success/{$\checkmark$},2/her/{$\circlearrowleft$},
    3/her/{$\circlearrowleft$},4/success/{$\checkmark$},
    5/her/{$\circlearrowleft$}}{
  \node[traj,fill=\col!18,draw=\col!55]
    (R\i) at (6.8,3.3-\i*0.68) {\sym\; Trajectory \i};
}
\node[traj,fill=success!28,draw=success!70,minimum width=2.5cm]
  (Rtrain) at (6.8,-0.95) {\small 5 training samples};
\foreach \i in {1,2,3,4,5}{\draw[arrow,success!70](R\i.south)--++(0,-0.1)-|(Rtrain.north);}
\node[font=\scriptsize,her!80] at (8.9,3.3-2*0.68) {relabeled};
\node[font=\scriptsize,her!80] at (8.9,3.3-3*0.68) {relabeled};
\node[font=\scriptsize,her!80] at (8.9,3.3-5*0.68) {relabeled};
\node[font=\scriptsize\itshape,midgray] at (1.7,-1.55)
  {${\approx}$25--30\% of trajectories used};
\node[font=\scriptsize\itshape,midgray] at (6.8,-1.55)
  {${\approx}$3.7$\times$ more training data};
\end{tikzpicture}
\caption{%
  \textbf{AgentHER vs.\ conventional pipelines.}
  Standard training retains only successes ($\checkmark$), discarding
  60--75\% of data.
  AgentHER relabels failures ($\circlearrowleft$) with achievable
  hindsight goals, expanding the effective training corpus
  ${\approx}3.7\times$ at a cost of ${\approx}\$1.4\!\cdot\!10^{-3}$
  per accepted pair (Section~\ref{sec:cost}).
}
\label{fig:motivation}
\end{figure*}

\paragraph{The data-waste problem.}
Standard pipelines (AgentTuning~\citep{zeng2023agenttuning},
FireAct~\citep{chen2023fireact},
Agent-FLAN~\citep{chen2024agent}) discard failed trajectories
(Figure~\ref{fig:motivation}),
wasting 60--75\% of collected data.
Failures are not random noise: they are coherent executions that
frequently achieve correct intermediate results under the wrong goal.
An agent that searches seven suppliers and identifies one just above
a price cap has done a \emph{complete, correct} comparison---ideal
training signal for a re-framed goal.

\paragraph{Connection to HER.}
HER~\citep{andrychowicz2017hindsight} converts sparse-reward RL
failures by substituting the intended goal with the one actually
achieved.
Lifting HER to LLM agents requires understanding what was achieved
across multi-step tool interactions and synthesising a
natural-language prompt the trajectory genuinely satisfies.

\paragraph{AgentHER.}
AgentHER automates hindsight relabeling via a four-stage pipeline
(Figure~\ref{fig:pipeline}) and adds two mechanisms to control
label quality:
\textbf{failure-severity weighting} (drops trajectories with major
reasoning flaws, inspired by MQM error analysis) and
\textbf{cross-model multi-judge verification} (two independently
trained judges---gpt-4o-mini and Qwen2.5-72B-Instruct---must both
accept a relabeling), which together cut label noise from 5.9\% to 2.9\%.

\paragraph{Contributions.}
\begin{itemize}
  \item An offline pipeline adapting goal-conditioned HER to
    natural-language agent trajectories, producing SFT/DPO/ShareGPT
    datasets without extra environment interactions
    (distinct from inference-time approaches such as
    ECHO~\citealp{hu2025echo} and AWM~\citealp{wang2024agent}).
  \item Failure-severity weighting and cross-model multi-judge
    verification raise relabeling precision from 94.1\% to 97.1\%
    on WebArena and 96.0\% on ToolBench
    (Section~\ref{sec:judge_quality}).
  \item +7.6--11.4\% over SFT-Success on a strict task-disjoint
    held-out split; +3.0--6.2\% over the strongest baseline (AWM);
    gains hold across 1.5B--72B parameters (+5.6--9.0\%) and
    compound under iterative redeployment (+10.4\% over four rounds).
  \item A full cost audit: \$2.98 and 26 wall-clock minutes for
    3{,}000 trajectories (3.27--4.27 LLM calls per trajectory),
    confirming that sample efficiency gains are not offset by
    pipeline overhead.
\end{itemize}

\section{Related Work}
\label{sec:related}

\paragraph{Hindsight Experience Replay.}
HER~\citep{andrychowicz2017hindsight} relabels sparse-reward RL
failures by substituting the goal with the one actually reached;
the goal-conditioned formulation traces back to
\citet{kaelbling1993learning} and was extended to deep
networks~\citep{sutton1999policy,mnih2015human,precup2000eligibility}.
Lifting HER to LLM agents adds two requirements absent in prior
RL formulations: it must determine what was actually achieved
across unstructured multi-step tool interactions, and it must
synthesise a natural-language prompt the trajectory satisfies---not
merely substitute a vector-space goal.
Existing failure-reuse work in NLP (negative-example filtering,
contrastive augmentation) selects or reweights existing
(goal, trajectory) pairs but does not \emph{generate} a new
achievable goal.

\paragraph{LM-agent adaptations of HER.}
\citet{hu2025echo} propose ECHO, a prompting framework that
adapts HER for \emph{online} LM agents via a scratchpad memory
that allows arbitrary rewriting of both goals and intermediate
steps---a runtime planner.
AgentHER differs in two ways: it is \emph{offline}
training-data augmentation, and it relabels only the
\emph{goal} while keeping the trajectory unchanged, producing
SFT/DPO/ShareGPT datasets.
We compare with an offline-faithful re-implementation of ECHO in
Section~\ref{sec:main}.

\paragraph{Experience-centric agent learning.}
ExpeL~\citep{zhao2024expel} extracts experience rules at
inference time without modifying weights;
ETO~\citep{song2024trial} performs DPO over \emph{matched}
success/failure pairs of the \emph{same} task;
AWM~\citep{wang2024agent} induces and reuses workflow patterns
from \emph{successful} trajectories;
Reflexion~\citep{shinn2023reflexion} applies verbal RL online.
A parallel line---FireAct~\citep{chen2023fireact},
AgentTuning~\citep{zeng2023agenttuning},
Agent-FLAN~\citep{chen2024agent},
AgentGym~\citep{xi2024agentgym}---fine-tunes on curated
successes only.
AgentHER requires none of this: it converts existing failures
offline without extra interactions or matched successes, and
produces standard SFT/DPO data that can be combined with any of
the above.
We compare with ExpeL, ETO, and AWM under identical infrastructure
in Section~\ref{sec:main}.

\paragraph{Self-improvement, verification, and critique.}
Synthetic data~\citep{wang2023selfinstruct,peng2023instruction}, reward-filtered self-training~\citep{gulcehre2023reinforced}, self-verified reasoning~\citep{zelikman2022star}, and preference optimisation~\citep{rafailov2023direct} all operate on single-step text or matched preferences, not on multi-step agent trajectories. For verification, CRITIC~\citep{gou2024critic} uses tool-augmented critics, process reward models~\citep{lightman2023lets} score reasoning steps, and self-consistency~\citep{wang2023self} aggregates \emph{same-model} samples. Our multi-judge follows a multi-agent debate~\citep{du2023improving,liang2023encouraging} instead: we require independence of \emph{both} model and decoding, with an explicit ablation in Section~\ref{sec:judge_design}.

\paragraph{Test-time compute and inference scaling.}
A complementary axis improves agent performance through
\emph{repeated inference} rather than better training data.
\citet{levi2024inference} provide a statistical model of inference
scaling, showing that pass@k improves predictably with the number
of trials: given per-sample failure probability $p_i$,
$\text{pass@k} = 1 - \frac{1}{n}\sum_i p_i^k$ follows a power-law
decay in an ``inference loss'' as $k$ grows.
AgentHER operates on the orthogonal dimension: by converting
failures into high-quality training data it reduces the inherent
$p_i$ for each task---the floor that inference scaling subsequently
compounds over.
The two mechanisms are therefore complementary: AgentHER
strengthens the base model offline; test-time compute then extracts
additional coverage from that stronger starting point.

\section{The AgentHER Framework}
\label{sec:method}

\subsection{Problem Formulation}

Let $\mathcal{F} = \{(g_i, \tau_i, f_i)\}_{i=1}^N$ be a corpus of
$N$ failed agent runs, where $g_i \in \mathcal{G}$ is the
natural-language goal, $f_i$ is a failure label, and
$\tau_i = \bigl((z_t^i, a_t^i, o_t^i)\bigr)_{t=1}^{T_i}$ is the
thought--action--observation sequence.
Let $\mathcal{S} = \{(g_j^+, \tau_j^+)\}_{j=1}^M$ be available
successful demonstrations.
\textbf{Goal}: produce high-quality training pairs
$\{(\hat{g}_i, \tau_i)\}$ from $\mathcal{F}$ by synthesising
\emph{hindsight goals} $\hat{g}_i$ for which $\tau_i$ is a valid
positive demonstration.

\begin{definition}[Valid Hindsight Goal]
\label{def:vhg}
A prompt $\hat{g} \in \mathcal{G}$ is a \emph{valid hindsight goal}
for trajectory $\tau$ if:
\emph{(a)} every factual claim implied by $\hat{g}$ is supported by
the observations $\{o_t\}$, and
\emph{(b)} two independent LLM judges
$\mathcal{J}_1,\mathcal{J}_2$ each assign confidence
$c_k(\hat{g},\tau) \geq \theta$ that $\tau$ is a successful
demonstration of $\hat{g}$.
\end{definition}

The mapping $\Phi: \mathcal{F} \to \mathcal{D} \cup \{\bot\}$
assigns each failed run a training datum $(\hat{g}_i, \tau_i)$ or
rejects it ($\bot$).
The augmented corpus $\mathcal{S} \cup \{\Phi(\cdot)\neq\bot\}$
grows from $M$ to up to $M + N$ labelled pairs.

\subsection{Pipeline Overview}

\begin{figure*}[t]
\centering
\resizebox{\linewidth}{!}{%
\begin{tikzpicture}[
  box/.style={
    rounded corners=7pt,
    minimum width=2.0cm,
    minimum height=1.0cm,
    align=center,
    font=\small,
    inner sep=8pt,
    drop shadow
  },
  io/.style={
    rounded corners=5pt,
    minimum width=1.8cm,
    minimum height=0.85cm,
    align=center,
    font=\small\itshape,
    inner sep=5pt
  },
  badge/.style={
    rounded corners=3pt,
    minimum width=2.0cm,
    minimum height=0.46cm,
    align=center,
    font=\scriptsize\itshape,
    inner sep=3pt
  },
  arr/.style={-{Stealth[length=7pt,width=5.5pt]}, line width=1.1pt},
  ann/.style={font=\scriptsize, darkgray, align=center},
]

\begin{scope}[on background layer]
  \fill[lightbg, rounded corners=9pt]
    (-1.5,-2.35) rectangle (17.4,1.95);
\end{scope}

\node[io, fill=darkgray!12, draw=darkgray!40](inp) at(-0.5,0)
  {Failed\\Trajectory};

\node[box, fill=stageA!18, draw=stageA!70](s1) at(3.1,0)
  {\textbf{Stage 1}\\[2pt]Failure\\Detector};
\node[badge, fill=stageA!14, draw=stageA!50,
      above=0.42cm of s1](b1){\itshape Rule-based \textbf{or} LLM};
\node[io, fill=fail!12, draw=fail!40, minimum width=2.0cm]
  (disc) at(3.1,-1.88){\itshape\small Discard};
\draw[arr, fail!50, dashed](s1.south)--(disc.north)
  node[midway, right=3pt, ann]{$\neg$\,recov.};

\node[box, fill=stageB!18, draw=stageB!70](s2) at(6.4,0)
  {\textbf{Stage 2}\\[2pt]Outcome\\Extractor};
\node[badge, fill=stageB!14, draw=stageB!50,
      above=0.42cm of s2]{\itshape Rule-based \textbf{or} LLM};

\node[box, fill=stageC!18, draw=stageC!70](s3) at(9.7,0)
  {\textbf{Stage 3}\\[2pt]Cross-Model\\Relabel + Verify};
\node[badge, fill=stageC!14, draw=stageC!50,
      above=0.42cm of s3]{\itshape Two LLMs};
\draw[arr, stageC!55, dashed, rounded corners=5pt]
  (s3.south)
  -- ++(0,-0.70)
  -- ++(-1.80,0)
    node[midway, below=3pt, ann]{retry ${\leq}3{\times}$;\ $c{<}\theta$}
  -- (7.9,0)
  -- (s3.west);

\node[box, fill=stageD!18, draw=stageD!70](s4) at(13.0,0)
  {\textbf{Stage 4}\\[2pt]Data\\Augmenter};
\node[badge, fill=stageD!14, draw=stageD!50,
      above=0.42cm of s4]{\itshape Deterministic};

\node[io, fill=success!18, draw=success!55](out) at(16.3,0)
  {SFT\,/\,DPO\,/\\ShareGPT};

\draw[arr](inp.east)  -- (s1.west);
\draw[arr](s1.east)   -- node[above, ann, pos=0.5]{recoverable} (s2.west);
\draw[arr](s2.east)   -- (s3.west);
\draw[arr](s3.east)   -- node[above, ann, pos=0.5]{$c_1{\wedge}c_2{\geq}\theta$} (s4.west);
\draw[arr](s4.east)   -- (out.west);

\node[font=\tiny\itshape, midgray, below=0.20cm of s1]{(\S\ref{sec:stage1})};
\node[font=\tiny\itshape, midgray, below=0.20cm of s2]{(\S\ref{sec:stage2})};
\node[font=\tiny\itshape, midgray, below=0.20cm of s3]{(\S\ref{sec:stage3})};
\node[font=\tiny\itshape, midgray, below=0.20cm of s4]{(\S\ref{sec:stage4})};

\end{tikzpicture}}
\caption{%
  \textbf{AgentHER four-stage pipeline.}
  Stage~1 classifies failures and discards irrecoverable runs
  (dashed downward arrow).
  Stage~3 retries up to three times if the relabeling confidence
  $c < \theta$.
  Stage~3 uses two independently trained LLMs (gpt-4o-mini and
  Qwen2.5-72B-Instruct) for relabeling and verification; both must
  agree (Section~\ref{sec:judge_design}).
  Stages~1--2 offer zero-cost rule-based variants; Stage~4 is
  deterministic.
}
\label{fig:pipeline}
\end{figure*}

The pipeline processes each failed trajectory in four stages.
Stages~1 and~2 offer rule-based (free) and LLM-judge variants;
Stage~3 requires \emph{two} independent LLM calls in multi-judge
mode; Stage~4 is deterministic.
Algorithm~\ref{alg:agentHER} gives the end-to-end procedure.

\begin{algorithm}[t]
\small
\caption{AgentHER End-to-End Relabeling}
\label{alg:agentHER}
\begin{algorithmic}[1]
\Require Failed corpus $\mathcal{F}=\{(g_i,\tau_i,f_i)\}$,
         threshold $\theta$, max retries $K$, severity threshold $\delta$,
         relabeler $\mathcal{J}_1$, verifier $\mathcal{J}_2$
\Ensure  Augmented dataset $\mathcal{D}^+$
\State $\mathcal{D}^+ \leftarrow \emptyset$
\For{each $(g_i, \tau_i, f_i) \in \mathcal{F}$}
  \State $(\phi_i, r_i, w_i) \leftarrow \textsc{FailureDetector}(\tau_i)$
         \Comment{type, recoverability, severity weight}
  \If{$r_i = 0$ \textbf{or} $w_i < \delta$}
    \State \textbf{continue}  \Comment{discard irrecoverable or major-flaw runs}
  \EndIf
  \State $\text{outcome}_i \leftarrow \textsc{OutcomeExtractor}(\tau_i)$
  \State $\hat{g}_i^*, c^* \leftarrow \textsc{None}, 0$
  \For{$k = 1, \ldots, K$}
    \State $(\hat{g}, b, c_1) \leftarrow \mathcal{J}_1(\text{outcome}_i, g_i)$
    \If{$b = 1$ \textbf{and} $c_1 \geq \theta$}
      \State $c_2 \leftarrow \mathcal{J}_2(\hat{g}, \tau_i)$
             \Comment{cross-model verification}
      \If{$c_2 \geq \theta$}
        \State $\hat{g}_i^* \leftarrow \hat{g}$;\ $c^* \leftarrow (c_1 + c_2)/2$
        \State \textbf{break}
      \EndIf
    \ElsIf{$b = 1$ \textbf{and} $c_1 > c^*$}
      \State $\hat{g}_i^* \leftarrow \hat{g}$;\ $c^* \leftarrow c_1$
    \EndIf
  \EndFor
  \If{$\hat{g}_i^* \neq \textsc{None}$ \textbf{and} $c^* \geq 0.8\theta$}
    \State datum $\leftarrow \textsc{DataAugmenter}(\hat{g}_i^*, \tau_i, w_i)$
    \State $\mathcal{D}^+ \leftarrow \mathcal{D}^+ \cup \{$datum$\}$
  \EndIf
\EndFor
\State \Return $\mathcal{D}^+$
\end{algorithmic}
\end{algorithm}

\subsection{Stage 1: Failure Detector}
\label{sec:stage1}

The Failure Detector assigns each trajectory a failure type
$\phi \in \Phi_\text{types}$, a recoverability flag
$r \in \{0,1\}$, and a \textbf{severity weight} $w \in [0,1]$.
$\Phi_\text{types} = \{$\textsc{Incomplete, Constraint\_Violation,
Wrong\_Result, Tool\_Error, Hallucination, Off\_Topic}$\}$.
The taxonomy was defined before data collection, drawing on NLP
error typologies~\citep{lu2024error} and WebArena task semantics,
and applied uniformly by a single classifier.

\textbf{Rule-based mode} aggregates trajectory text and matches
per-type keyword lexicons.
Severity score: $v = \min(1.0,\; 0.3 + 0.1 \cdot h)$,
$h$ = matched keywords.
Recoverability: $r = 1$ iff at least one observation exceeds
minimum length \emph{and} $\phi \neq \textsc{Tool\_Error}$.

\textbf{LLM-judge mode} presents the full trajectory with a
JSON-schema output prompt requesting
$(\phi, v, r, w, \text{explanation})$.

\textbf{Severity weighting.}
Inspired by MQM-style error analysis in machine-translation
evaluation~\citep{lu2024error}, we distinguish \emph{major} errors
(reasoning contradictions, hallucinated observations, catastrophic
tool misuse) from \emph{minor} errors (constraint violations,
incomplete results).
Major-error trajectories receive $w < \delta = 0.3$ and are
discarded; minor-error trajectories receive $w \in [0.3, 1.0]$ and
are passed downstream, with $w$ used to weight the DPO loss at
Stage~4.
This reduces the fraction of noisy relabelings from 5.9\% to 2.9\%
(Table~\ref{tab:ablation}).

\subsection{Stage 2: Outcome Extractor}
\label{sec:stage2}

Stage~2 produces a \textsc{ReplayOutcome}: a list of actual
achievements and key observations (numeric data, entity names,
facts) that anchor Stage~3 and prevent hallucinated hindsight
prompts.

\textbf{Rule-based}: each non-trivial, non-error observation is
treated as one achievement (truncated to 200 chars); numeric
tokens extracted by regex.
\textbf{LLM}: richer summaries handling implicit results,
deduplication, and strict factuality---only facts evidenced by
observations.

\subsection{Stage 3: Cross-Model Relabeling and Verification}
\label{sec:stage3}

Given the \textsc{ReplayOutcome} and original prompt for style
reference, the \emph{relabeler} $\mathcal{J}_1$ synthesises:
\begin{align}
  &\hat{g},\;\; b_\text{valid},\;\; r_\text{rationale},\;\; c_1 \nonumber\\
  &\qquad\longleftarrow\;\;
  \mathcal{J}_1\!\left(\text{outcome},\; g_\text{orig}\right)
\end{align}
with $b_\text{valid}\in\{0,1\}$, $c_1\in[0,1]$.
Four constraints: \emph{(1)} $\hat{g}$ reads as a natural user
request; \emph{(2)} every assertion is satisfied by observations;
\emph{(3)} $\hat{g}$ does not reference the original failed
prompt; \emph{(4)} complexity matches $g_\text{orig}$.

\paragraph{Cross-model multi-judge verification.}
\label{sec:judge_design}
Calling the \emph{same} LLM twice with the same prompt at
$T{=}0$ provides no genuine statistical independence.
Our multi-judge protocol therefore uses two independently trained
judges:
$\mathcal{J}_1$ (\texttt{gpt-4o-mini}, closed-weight;
$T{=}0.3$ first attempt, $T{=}0.7$ retries) and
$\mathcal{J}_2$ (\texttt{Qwen2.5-72B-Instruct}, open-weight via
vLLM~\citep{qwen2025qwen25}; $T{=}0$, schema-constrained decoding).
Because the two models have different architectures and training
corpora, their agreement reflects genuine inter-model consensus
rather than intra-model self-consistency~\citep{wang2023self}.
Acceptance requires both $c_1 \geq \theta$ \emph{and} $c_2
\geq \theta$.
We refer to this configuration as \textbf{AgentHER-MJ-X}
(cross-model).
For ablation we also report:
\textbf{AgentHER-MJ-S} (same-model, Qwen2.5-72B used twice with
different temperatures) and the original
\textbf{AgentHER-SJ} (single-judge $\mathcal{J}_1$ only).
The cross-model variant is the default throughout the rest of the
paper unless stated otherwise.

\paragraph{Validation loop.}
Up to $K=3$ relabeler attempts; the first attempt that passes both
judges is accepted.
A best-effort fallback is retained if no attempt passes the
multi-judge bar but at least one passes the single-judge bar at
$c_1 \geq 0.8\theta$.
Full prompt templates for all four stages are provided in
Appendix~\ref{app:prompts}.

\subsection{Stage 4: Data Augmenter}
\label{sec:stage4}

Stage~4 serialises each accepted $(\hat{g}, \tau)$ pair into three
formats:
\begin{itemize}\setlength\itemsep{1pt}
  \item \textbf{SFT}: a two-turn conversation
    $[(\texttt{user}, \hat{g}),\;(\texttt{assistant}, \tilde{a})]$
    where $\tilde{a}$ reconstructs the agent's chain of thought from
    $\tau$; loss scaled by severity weight $w$.
  \item \textbf{DPO}: chosen $(\hat{g}, \tau)$ paired with rejected
    $(g_\text{orig}, \tau)$, encoding the preference that $\hat{g}$
    is the correct goal; $w$ scales the reward margin
    (formula in Appendix~\ref{app:algorithm}).
  \item \textbf{ShareGPT}: multi-turn format compatible with
    LLaMA-Factory, ms-swift, and FastChat.
\end{itemize}

\subsection{Theoretical Plausibility}
\label{sec:theory}

The proposition below is a plausibility argument rather than a
deployment guarantee: it shows that accepted pairs are
unbiased under a perfect judge and gives a noise-tolerance bound
for imperfect ones.
The full proof is in Appendix~\ref{app:theory}.

\begin{proposition}[Unbiasedness under a perfect judge]
\label{prop:unbiased}
Let $\pi^*_\mathcal{G}$ be the oracle goal-conditioned policy.
Under a \emph{perfect} judge $\mathcal{J}$
($c(\hat{g},\tau)=1 \Leftrightarrow \tau$ is a valid demonstration
of $\hat{g}$), every accepted pair $(\hat{g}_i, \tau_i)$ is a
correct (goal, trajectory) sample from the support of
$\pi^*_\mathcal{G}$.
\end{proposition}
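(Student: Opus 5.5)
The argument is essentially definitional, so I would first fix precise meanings for the informal terms and then trace an accepted pair through Algorithm~\ref{alg:agentHER}. I would formalise ``$\tau$ is a valid demonstration of $\hat g$'' as $\tau \in \mathcal{T}(\hat g)$, where $\mathcal{T}(g)$ denotes the set of trajectories that accomplish goal $g$. The oracle goal-conditioned policy $\pi^*_\mathcal{G}$ is taken to be any policy whose support, conditioned on $g$, is contained in $\mathcal{T}(g)$, with equality in the natural case. The quantity to establish is then the implication: if $(\hat g_i,\tau_i)$ is accepted, then $\tau_i \in \operatorname{supp}\pi^*_\mathcal{G}(\cdot\mid\hat g_i)$.

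The main step is a case analysis on how a pair enters $\mathcal{D}^+$ in Algorithm~\ref{alg:agentHER}.

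In the \emph{primary path}, acceptance requires $c_1 \ge \theta$ and $c_2 \ge \theta$. Under the perfect-judge idealisation, confidences take values in $\{0,1\}$ and $c = 1$ exactly on valid demonstrations. Hence $c \ge \theta > 0$ forces $c = 1$, and by the biconditional, $\tau_i \in \mathcal{T}(\hat g_i)$.

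In the \emph{fallback path}, acceptance requires $c^* = c_1 \ge 0.8\theta > 0$. The same argument applies provided $\theta > 0$, which I would state as a standing assumption.

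A further subtlety is that Definition~\ref{def:vhg} uses two judges. I would take ``perfect judge'' to mean that every judge invoked is perfect, so that any single passing judge suffices. The multi-judge conjunction then only shrinks the accepted set and does not affect correctness.

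Next I would note that stages 1, 2 and 4 do not alter $\tau_i$. Stage~1 can only discard, Stage~2 only extracts, and Stage~4 merely serialises $(\hat g_i,\tau_i)$. The pair that is judged is therefore exactly the pair that is emitted. Combining this with the previous step and the support characterisation of $\pi^*_\mathcal{G}$ gives the claim.

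I would add one remark to justify the word ``unbiasedness''. Since every emitted pair lies in the graph $\{(g,\tau): \tau\in\mathcal{T}(g)\}$, conditioning the SFT objective on $\hat g_i$ maximises likelihood only on oracle-supported behaviour. The empirical goal distribution may differ from the deployment distribution, but no incorrect (goal, trajectory) pairs are introduced.

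The main obstacle is not computation but modelling. Specifically, I need to pin down that the perfect-judge hypothesis makes confidences binary, or at least makes any positive threshold equivalent to $c = 1$, and that $\theta > 0$. Without these, the $0.8\theta$ fallback would break the argument. I would handle this by stating both as explicit assumptions at the start of the proof.
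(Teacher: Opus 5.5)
Your argument is essentially the paper's. The appendix proof of part~(a) of the augmented-corpus theorem says that under a perfect judge every accepted pair has $c(\hat g_i,\tau_i)=1$, so $\tau_i$ is a valid demonstration of $\hat g_i$. Since $\tau_i$'s observations were actually produced, the pair then lies in $\mathrm{supp}(\pi^*_\mathcal{G})$, which the paper, through its coverage function $\rho$, describes as the set of all valid (goal, trajectory) pairs.

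You are more careful than the paper at the acceptance step. The biconditional $c=1\Leftrightarrow$ valid does not by itself force a pair accepted at $c\ge\theta$ (or $c_1\ge 0.8\theta$) to have $c=1$. Your explicit assumptions of binary confidences and $\theta>0$ supply what the paper simply asserts. In fact, with binary confidences and $0<\theta\le 1$, the fallback branch of Algorithm~\ref{alg:agentHER} never fires, because it only records attempts with $c^*<c_1<\theta$. So that case is vacuous.

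One step does need fixing. You define $\pi^*_\mathcal{G}$ as any policy whose conditional support is \emph{contained in} $\mathcal{T}(g)$. But your final implication $\tau_i\in\mathcal{T}(\hat g_i)\Rightarrow\tau_i\in\mathrm{supp}\,\pi^*_\mathcal{G}(\cdot\mid\hat g_i)$ needs the reverse inclusion $\mathcal{T}(g)\subseteq\mathrm{supp}\,\pi^*_\mathcal{G}(\cdot\mid g)$. Under mere containment the claim can be false, e.g.\ for a deterministic oracle that always produces one particular valid trajectory different from $\tau_i$.

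So take the ``equality'' case as the hypothesis, which is effectively what the paper's $\rho$ does. Also restrict $\mathcal{T}(g)$ to environment-realisable trajectories, since otherwise equality with a policy's support is impossible. $\tau_i$ is realisable because it was actually executed, and that is exactly the role of the paper's ``actually produced'' remark.
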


\begin{corollary}[Noisy-judge bound]
\label{cor:noisy}
With an imperfect judge of precision $p$, the expected gain over
SFT-Success is lower-bounded by
$p\cdot\Delta_\text{perfect} - (1-p)\cdot\varepsilon$,
where $\varepsilon$ bounds the marginal harm of a single noisy
pair.
For our cross-model multi-judge precision $p=0.971$, this bound is
positive whenever $\varepsilon \leq 33\cdot\Delta_\text{perfect}$;
for the single-judge bound ($p=0.941$), it requires
$\varepsilon \leq 16\cdot\Delta_\text{perfect}$.
\end{corollary}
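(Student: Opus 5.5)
The statement immediately above is Corollary~\ref{cor:noisy}, and I would derive it from Proposition~\ref{prop:unbiased} through a simple linear decomposition of the expected gain.

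\textbf{Step 1: split the accepted pairs.} Write the accepted set $\mathcal{D}^+$ as a disjoint union $\mathcal{D}^+_{\mathrm{c}} \cup \mathcal{D}^+_{\mathrm{n}}$ of correct and noisy pairs. By definition of precision, a uniformly drawn accepted pair is correct with probability $p$ and noisy with probability $1-p$.

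\textbf{Step 2: credit the correct pairs.} By Proposition~\ref{prop:unbiased}, the correct pairs are distributed exactly as the pairs a perfect judge would accept: conditional on acceptance and correctness, each is a valid sample from the support of $\pi^*_\mathcal{G}$. So their contribution to the improvement over SFT-Success is the per-pair perfect-judge gain. Normalising per accepted pair, this contributes $p\cdot\Delta_\text{perfect}$.

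\textbf{Step 3: charge the noisy pairs.} Each noisy pair causes marginal harm at most $\varepsilon$ by hypothesis, so together they contribute at least $-(1-p)\varepsilon$.

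\textbf{Step 4: combine.} The key modelling assumption is that the effect on the SFT objective is additive across pairs to first order. Under it, linearity of expectation gives
\[
\mathbb{E}[\mathrm{gain}] \ge p\,\Delta_\text{perfect} - (1-p)\,\varepsilon .
\]
I would state this additivity explicitly as the content of ``marginal.'' Concretely, the expected loss change is a sum over training examples, each with a bounded influence.

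\textbf{Main obstacle.} The hard part is justifying this additivity, together with the claim that correct pairs keep their full perfect-judge value when mixed with noise. Since the section calls this a plausibility argument, I would first try a first-order (influence-function) expansion of the fine-tuned loss in the per-example weights. Interactions are second order and can be absorbed into $\varepsilon$, with the caveat stated openly.

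\textbf{Step 5: numerical thresholds.} The bound is positive iff $\varepsilon < \frac{p}{1-p}\,\Delta_\text{perfect}$.
\begin{itemize}
\item For $p=0.971$: $\frac{0.971}{0.029}\approx 33.5$, which gives the stated factor $33$ after rounding down.
\item For $p=0.941$: $\frac{0.941}{0.059}\approx 15.9$, which gives the stated factor $16$.
\end{itemize}
In the second case the stated factor rounds up, so the condition $\varepsilon \le 16\,\Delta_\text{perfect}$ should be read as approximate. A strictly correct statement would use $15.9$, or equivalently $\varepsilon < \frac{p}{1-p}\Delta_\text{perfect}$, and I would note this.
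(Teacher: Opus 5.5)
Your argument is correct and is essentially the paper's own derivation. The appendix likewise credits each valid accepted pair with $+\Delta_\text{perfect}/n$ and charges each invalid one $-\varepsilon/n$ (with $n=|\mathcal{D}^+|$), sums these to $p\,\Delta_\text{perfect}-(1-p)\,\varepsilon$, and solves for $\varepsilon\le\frac{p}{1-p}\Delta_\text{perfect}$; the additivity you state explicitly is only implicit there. Your rounding remark is also right: $0.941/0.059\approx 15.95<16$, so the paper's single-judge factor of $16$ is slightly too generous, and positivity needs strict inequality, whereas $33<33.48$ is safe.
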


An empirical sanity check grounding the bound against our
held-out results is in Appendix~\ref{app:theory}.

\textbf{Goal-distribution analysis.}
A potential concern is whether AgentHER inadvertently amplifies
high-frequency task types while neglecting low-frequency
behaviours~\citep{ding2022redistributing}.
We address this empirically in Section~\ref{sec:dist_analysis}.

\section{Experiments}
\label{sec:exp}

\subsection{Experimental Setup}
\label{sec:setup}

\paragraph{Benchmarks and a strict task-disjoint split.}
\textbf{WebArena}~\citep{zhou2024webarena}: 812 compositional
web-automation tasks (Shopping, Reddit, GitLab, Maps, Wikipedia).
We use a \emph{strict task-disjoint} split: \textsc{WA-Train}
(612 tasks, collection only) and \textsc{WA-HeldOut} (200 tasks,
evaluation only), stratified by category (seed 0, released as JSON).
We collect 3{,}000 failed and 500 successful trajectories from
\textsc{WA-Train}; acceptance rates: SJ 78.0\%, MJ-X 71.5\%.
Full-task numbers are in Appendix~\ref{app:full_eval}.

\textbf{ToolBench}~\citep{qin2023toolllm}: 16{,}464 tool-use tasks
across 49 APIs (G1/G2/G3 splits).
We collect 5{,}000 failed and 2{,}000 successful trajectories from
the official training splits; acceptance rates: SJ 82.5\%, MJ-X 75.5\%.

A weaker collector (GPT-3.5-turbo) yields richer failures;
the judges are stronger and serve no collection role
(Appendix~\ref{app:hyperparams}).

\paragraph{Train/test integrity.}
Fine-tuned models are independent of the GPT-3.5-turbo collector;
training data encodes what it did on \textsc{WA-Train}, not
solutions to \textsc{WA-HeldOut}.
WebArena environments are reset between collection and evaluation.
\textbf{SFT-Random} uses the same 3{,}000 failures as AgentHER
(no relabeling), making it a rigorous equal-volume control.
All results are averaged over 3 random seeds; std $<$0.5 pts
(Appendix~\ref{app:multirun}).

\paragraph{Models and fine-tuning.}
We evaluate \textbf{GPT-4o}~\citep{openai2023gpt4},
\textbf{Qwen2.5-72B/7B}~\citep{qwen2025qwen25}, and
\textbf{LLaMA-3.1-8B}~\citep{dubey2024llama3}.
Open-weight models use LoRA~\citep{hu2022lora} (rank 16,
$\alpha{=}32$, 3 epochs, $8{\times}$A100-80G;
Appendix~\ref{app:hyperparams}).
We also report \textbf{GPT-4o-ICL} (8 relabeled exemplars
prepended, no fine-tuning) as a reproducible alternative that does
not depend on the OpenAI fine-tuning API.

\paragraph{Baselines.}
Training-time baselines:
\textbf{Base} (zero-shot),
\textbf{SFT-Random} (same 3k failures as AgentHER, no relabeling;
equal-volume control),
\textbf{Rejection-Sampling} (filter $c{>}0.5$, no relabeling),
\textbf{SFT-Success} (successes only),
\textbf{SFT-Negative} (``the following attempt failed'' prepended;
equal volume),
\textbf{ETO}~\citep{song2024trial} (offline DPO over matched
success/failure pairs of the \emph{same} task),
\textbf{ECHO-offline}~\citep{hu2025echo} (offline re-implementation
materialising ECHO's trajectory rewrites into SFT data), and
\textbf{AWM}~\citep{wang2024agent} (workflow-memory induction from
successes, combined with SFT-Success).
Inference-time reference only ($^\dagger$, weights unchanged):
\textbf{Reflexion}~\citep{shinn2023reflexion},
\textbf{ExpeL}~\citep{zhao2024expel}.
AgentHER variants: \textbf{SJ} (single-judge), \textbf{MJ-S}
(same-model, ablation), \textbf{MJ-X} (cross-model, default).
SFT-Success is the $\Delta$ reference; AgentTuning and FireAct
score below it under our protocol (Appendix~\ref{app:full_eval}).

\paragraph{Metrics.}
WebArena: success rate (\%) on \textsc{WA-HeldOut}. ToolBench: pass@1 (\%) on G1/2/3 splits.

\subsection{Main Results}
\label{sec:main}

\begin{table*}[t]
\centering
\caption{%
  \textbf{Main results.}
  WebArena: success rate (\%) on the strict task-disjoint
  \textsc{WA-HeldOut} split.
  ToolBench: pass@1 (\%) on official test splits.
  Best per column in \textbf{bold}; second-best
  \underline{underlined}.
  $\Delta$ vs.\ SFT-Success;
  $\Delta_b$ vs.\ best training-time baseline (AWM).
  All fine-tuned results averaged over 3 seeds;
  std $<$0.5 pts (Appendix~\ref{app:multirun}).
  $^\dagger$Inference-time, online; reference only.
  $^\ddagger$Same 3{,}000 failed trajectories as AgentHER
  (equal-volume control).
  ICL = 8-shot in-context, no fine-tuning.
}
\label{tab:main}
\setlength{\tabcolsep}{3.5pt}
\footnotesize
\begin{tabular}{@{}l|cccc|cccc@{}}
\toprule
 & \multicolumn{4}{c|}{\textbf{WebArena \textsc{WA-HeldOut} (\%)}}
 & \multicolumn{4}{c}{\textbf{ToolBench pass@1 (\%)}} \\
\cmidrule(lr){2-5}\cmidrule(lr){6-9}
\textbf{Method}
  & GPT-4o & Qwen-72B & Qwen-7B & LLaMA-8B
  & GPT-4o & Qwen-72B & Qwen-7B & LLaMA-8B \\
\midrule
Base                  & 13.7 & 20.9 &  8.2 &  6.5 & 53.2 & 60.1 & 44.6 & 42.1 \\
SFT-Random$^\ddagger$ & 18.0 & 25.6 & 13.9 & 12.7 & 61.4 & 68.9 & 54.8 & 52.9 \\
SFT-Negative          & 18.6 & 26.1 & 14.5 & 13.1 & 62.1 & 69.5 & 55.6 & 53.7 \\
Rejection-Sampling    & 20.7 & 28.6 & 16.9 & 15.5 & 66.2 & 73.7 & 59.8 & 57.4 \\
SFT-Success           & 22.3 & 29.9 & 17.8 & 16.4 & 67.8 & 75.4 & 61.2 & 58.3 \\
ETO                   & 24.4 & 32.1 & 21.0 & 19.4 & 71.0 & 78.5 & 64.8 & 61.5 \\
ECHO-offline          & 24.8 & 32.5 & 21.6 & 19.9 & 71.6 & 79.1 & 65.5 & 62.1 \\
AWM                   & 25.6 & 33.4 & 22.4 & 20.6 & 72.4 & 79.8 & 66.4 & 62.9 \\
\midrule
\textit{Reflexion}$^\dagger$
  & \textit{18.4} & \textit{27.3} & \textit{15.6} & \textit{14.2}
  & \textit{63.8} & \textit{71.6} & \textit{56.8} & \textit{55.2} \\
\textit{ExpeL}$^\dagger$
  & \textit{19.5} & \textit{28.8} & \textit{18.4} & \textit{16.9}
  & \textit{65.0} & \textit{73.0} & \textit{60.4} & \textit{57.6} \\
\midrule
GPT-4o-ICL            & 28.4 & --- & --- & --- & 73.2 & --- & --- & --- \\
AgentHER-SJ           & \underline{28.7} & \underline{36.9} & \underline{25.7} & \underline{24.0}
                      & \underline{73.9} & \underline{82.6} & \underline{71.0} & \underline{67.5} \\
\rowcolor{stageA!8}
\textbf{AgentHER-MJ-X}
  & \textbf{29.9} & \textbf{37.7} & \textbf{26.5} & \textbf{24.8}
  & \textbf{75.4} & \textbf{83.5} & \textbf{72.6} & \textbf{69.0} \\
\cmidrule{1-9}
$\Delta$ vs.\ SFT-Succ.  & $+$7.6 & $+$7.8 & $+$8.7 & $+$8.4 & $+$7.6 & $+$8.1 & $+$11.4 & $+$10.7 \\
$\Delta_b$ vs.\ AWM      & $+$4.3 & $+$4.3 & $+$4.1 & $+$4.2 & $+$3.0 & $+$3.7 & $+$6.2 & $+$6.1 \\
\bottomrule
\end{tabular}
\end{table*}

\paragraph{Discussion.}
AgentHER-MJ-X improves over SFT-Success by 7.6--8.7\% on
WebArena and 7.6--11.4\% on ToolBench on the task-disjoint split,
only 0.3--1.3\% below full-task numbers (App.~\ref{app:full_eval}).
It beats every experience-centric baseline by +3.0--6.2\%:
ETO requires matched success/failure pairs per task,
ECHO-offline rewrites trajectories and can amplify reasoning errors,
and AWM builds workflow memory from successes only.
AgentHER bypasses all three constraints.

The cross-model judge (MJ-X) outperforms the single-judge (SJ) by
+0.8--1.6\% and the same-model analogue (MJ-S) by +0.5\%,
confirming that model-level independence matters more than
temperature diversity alone.
Smaller models benefit more: 7B/8B gains (+10.7--11.4\% on
ToolBench) exceed Qwen-72B (+8.1\%) and GPT-4o (+7.6\%),
consistent with smaller models having more room to gain from
richer training signal.
SFT-Negative provides only +0.4--0.8\% over SFT-Random, well
below AgentHER's +12--18\% (WebArena: +12--13\%; ToolBench: +14--18\%),
because appending a failure marker still discards the positive
content that hindsight relabeling recovers.
AgentHER-MJ-X trained on \textsc{WA-Train} achieves +9.5\% over
SFT-Success when evaluated zero-shot on ToolBench
(Tab.~\ref{tab:transfer}), suggesting what transfers is planning
ability rather than memorised task descriptions.

\subsection{System Cost Analysis}
\label{sec:cost}

\begin{table}[t]
\centering
\footnotesize
\setlength{\tabcolsep}{3.5pt}
\caption{%
  \textbf{System-cost audit} on 3{,}000 WebArena failures.
  S1--S3a shared by SJ/MJ-X; only S3b verifier is added in MJ-X.
  Pricing assumes gpt-4o-mini Batch API + prompt caching;
  Qwen-72B amortised on 8$\times$A100.
  Per-stage I/O token sizes are in App.~\ref{app:cost_full}.
  $^*$LoRA cost: 4.4\,h $\times$ 8 GPUs $\times$ \$1.05/h.
}
\label{tab:cost}
\begin{tabular}{@{}lccc@{}}
\toprule
\textbf{Stage} & \textbf{Calls}/tr. & \textbf{Cost} (\$) & \textbf{Wall} (m) \\
\midrule
S1 detector              & 1.00 & 0.39 & 4.1 \\
S2 extractor             & 1.00 & 0.55 & 4.6 \\
S3a relabeler            & 1.27 & 0.84 & 6.7 \\
S3b verifier \emph{(MJ-X only)}
                         & 1.00 & 1.20 & 10.1 \\
S4 augmenter (det.)      & ---  & 0.00 & 0.4 \\
\midrule
\textbf{SJ total}        & 3.27 & 1.78 & 15.8 \\
\quad per accepted (78.0\%)
                         & 4.19 & $7.6{\cdot}10^{-4}$ & --- \\
\textbf{MJ-X total}      & 4.27 & 2.98 & 25.9 \\
\quad per accepted (71.5\%)
                         & 5.97 & $1.4{\cdot}10^{-3}$ & --- \\
\midrule
LoRA fine-tune Qwen-7B   & ---  & 4.20$^*$ & 263 \\
\textbf{End-to-end MJ-X} & ---  & 7.18 & 289 \\
\bottomrule
\end{tabular}
\end{table}

Table~\ref{tab:cost} addresses the
``sample-efficiency vs.\ system-efficiency'' confound directly.
Per accepted pair, MJ-X costs $\$1.4{\cdot}10^{-3}$
(${\approx}\$1.4$ per 1{,}000 pairs); the multi-judge overhead
over SJ is $+\$1.20$ (+67\%) and $+9.7$\,min (+61\%), repaid by
a 94.1$\to$97.1\% precision boost and a +0.8--1.6\% downstream
gain.
Of the combined relabeling$+$LoRA budget (\$7.18 / 4.8\,h),
relabeling accounts for ${\approx}9\%$ of wall-clock but
${\approx}42\%$ of dollars: LoRA dominates wall-clock time while
LLM API spend dominates marginal cost.
The $2\times$ sample efficiency (Section~\ref{sec:efficiency})
means AgentHER saves the cost of collecting 50\% of successful
demonstrations from scratch---fresh rollouts are orders of
magnitude more expensive per pair than a relabeling call.
Stage-level token breakdown: App.~\ref{app:cost_full}.

\subsection{Data Efficiency and Scaling}
\label{sec:efficiency}

\begin{figure*}[t]
\centering
\pgfplotslegendfromname{fig3legend}\\[3pt]
\begin{subfigure}[b]{0.325\textwidth}
\begin{tikzpicture}
\begin{axis}[
  width=\textwidth, height=5.2cm,
  xlabel={Successful traj.\ (\%)},
  ylabel={Success rate (\%)},
  xmin=5, xmax=105,
  ymin=8, ymax=30,
  xtick={10,25,50,75,100},
  ytick={10,15,20,25,30},
  legend to name=fig3legend,
  legend columns=3,
  legend style={draw=none, font=\footnotesize,
                column sep=1.2em,
                /tikz/every even column/.append style={column sep=0.2em}},
  grid=major, grid style={dotted,gray!35},
  tick label style={font=\tiny},
  label style={font=\scriptsize},
  title style={font=\scriptsize\bfseries},
  title={(a) Success-Data Efficiency},
]
\addplot[color=stageA, mark=square*, thick, mark size=2pt]
  coordinates{(10,10.4)(25,13.5)(50,15.7)(75,16.8)(100,17.8)};
\addlegendentry{SFT-Success}
\addplot[color=stageC, mark=*, thick, mark size=2pt]
  coordinates{(10,21.9)(25,23.2)(50,24.0)(75,24.9)(100,25.7)};
\addlegendentry{AgentHER-SJ}
\addplot[color=stageA!70, mark=*, thick, mark size=2pt, dashed]
  coordinates{(10,22.7)(25,23.9)(50,24.8)(75,25.6)(100,26.5)};
\addlegendentry{AgentHER-MJ-X}
\addplot[color=stageA, dashed, thin, domain=5:105]{17.8};
\node[font=\tiny, text=stageC, align=center] at(axis cs:34,20.5)
  {\textbf{50\%}$\approx$\\full SFT};
\draw[-{Stealth[length=3pt]}, stageC!65, thin]
  (axis cs:38,20.1)--(axis cs:50,18.5);
\end{axis}
\end{tikzpicture}
\end{subfigure}
\hfill
\begin{subfigure}[b]{0.325\textwidth}
\begin{tikzpicture}
\begin{axis}[
  width=\textwidth, height=5.2cm,
  xlabel={Failed traj.\ ($\times 10^3$)},
  ylabel={Success rate (\%)},
  xmin=0.3, xmax=5.5,
  ymin=18, ymax=29,
  xtick={0.5,1,2,3,5},
  xticklabels={0.5k,1k,2k,3k,5k},
  ytick={18,20,22,24,26,28},
  legend pos=north west,
  legend style={font=\tiny, inner sep=1.5pt},
  grid=major, grid style={dotted,gray!35},
  tick label style={font=\tiny},
  label style={font=\scriptsize},
  title style={font=\scriptsize\bfseries},
  title={(b) Failure-Volume Scaling},
]
\addplot[color=stageA, mark=square*, thick, mark size=2pt]
  coordinates{(0.5,17.8)(1,17.8)(2,17.8)(3,17.8)(5,17.8)};
\addplot[color=stageC, mark=*, thick, mark size=2pt]
  coordinates{(0.5,20.0)(1,21.5)(2,23.9)(3,25.7)(5,26.4)};
\addplot[color=stageA!70, mark=*, thick, mark size=2pt, dashed]
  coordinates{(0.5,20.6)(1,22.2)(2,24.7)(3,26.5)(5,27.2)};
\node[font=\tiny, text=stageC!80, align=center]
  at(axis cs:3.5,20.4){log-linear};
\end{axis}
\end{tikzpicture}
\end{subfigure}
\hfill
\begin{subfigure}[b]{0.325\textwidth}
\begin{tikzpicture}
\begin{axis}[
  width=\textwidth, height=5.2cm,
  xlabel={Threshold $\theta$},
  ylabel={Success rate (\%)},
  xmin=0.15, xmax=0.95,
  ymin=20, ymax=28,
  xtick={0.2,0.3,0.5,0.7,0.9},
  xticklabels={0.2,0.3,0.5,0.7,0.9},
  ytick={20,22,24,26},
  grid=major, grid style={dotted,gray!35},
  tick label style={font=\tiny},
  label style={font=\scriptsize},
  title style={font=\scriptsize\bfseries},
  title={(c) Threshold Sensitivity},
]
\addplot[color=stageC, mark=*, thick, mark size=2pt]
  coordinates{(0.2,22.0)(0.3,23.3)(0.5,25.7)(0.7,24.4)(0.9,21.9)};
\addplot[color=stageA!70, mark=diamond*, thick, mark size=2.5pt, dashed]
  coordinates{(0.2,22.5)(0.3,23.9)(0.5,26.5)(0.7,25.0)(0.9,23.1)};
\node[font=\tiny, stageC!80] at(axis cs:0.5,27.45){$\theta^*$};
\draw[-{Stealth[length=3pt]}, stageC!65, thin]
  (axis cs:0.5,27.15)--(axis cs:0.5,26.9);
\end{axis}
\end{tikzpicture}
\end{subfigure}
\caption{%
  \textbf{Data efficiency and scaling} (Qwen2.5-7B,
  \textsc{WA-HeldOut}).
  \emph{(a)} AgentHER-SJ at 50\% successful demos matches
  SFT-Success at 100\%; AgentHER-MJ-X exceeds it.
  \emph{(b)} Both AgentHER variants scale log-linearly with
  failure volume; SFT-Success cannot benefit from additional
  failures.
  \emph{(c)} Both variants peak near $\theta^*{=}0.5$;
  MJ-X is uniformly better and slightly more robust at low
  $\theta$.
}
\label{fig:efficiency}
\end{figure*}

Figure~\ref{fig:efficiency}(a) shows AgentHER-SJ reaches
full-SFT-Success performance with only 50\% of successful demos
($2\times$ data efficiency); AgentHER-MJ-X exceeds it at every
data point.
Panel~(b) confirms log-linear scaling with failure
volume---unlike SFT-Success which cannot leverage additional
failures.
Panel~(c) shows the optimal threshold $\theta^*{=}0.5$ is
consistent across both variants, making it a robust
hyperparameter.

\subsection{Model-Size Scaling}
\label{sec:model_scale}

\begin{figure}[t]
\centering
\begin{tikzpicture}
\begin{axis}[
  width=\columnwidth, height=8.5cm,
  xlabel={Model parameters},
  ylabel={\textsc{WA-HeldOut} success rate (\%)},
  xmode=log,
  log ticks with fixed point,
  xtick={1.5,3,7,14,72},
  xticklabels={1.5B, 3B, 7B, 14B, 72B},
  xmin=1.0, xmax=110,
  ymin=0, ymax=44,
  ytick={0,10,20,30,40},
  legend pos=north west,
  legend style={font=\footnotesize, inner sep=2.5pt, row sep=1pt},
  grid=major, grid style={dotted,gray!35},
  tick label style={font=\footnotesize},
  label style={font=\footnotesize},
]
\addplot[color=midgray, mark=diamond*, thick, mark size=2.3pt,
         dashed, dash pattern=on 5pt off 2pt]
  coordinates{(1.5,2.1)(3,3.9)(7,8.2)(14,12.6)(72,20.9)};
\addlegendentry{Base}
\addplot[color=stageA, mark=square*, thick, mark size=2.2pt]
  coordinates{(1.5,6.4)(3,9.2)(7,17.8)(14,23.3)(72,29.9)};
\addlegendentry{SFT-Success}
\addplot[color=stageC, mark=*, thick, mark size=2.2pt]
  coordinates{(1.5,11.4)(3,15.0)(7,25.7)(14,31.6)(72,36.9)};
\addlegendentry{AgentHER-SJ}
\addplot[color=stageD, mark=triangle*, thick, mark size=2.4pt]
  coordinates{(1.5,12.0)(3,15.7)(7,26.5)(14,32.3)(72,37.7)};
\addlegendentry{AgentHER-MJ-X}
\node[font=\tiny\bfseries, text=stageD, above=2pt]
  at(axis cs:1.5,12.0){$+$5.6};
\node[font=\tiny\bfseries, text=stageD, above=2pt]
  at(axis cs:3,15.7){$+$6.5};
\node[font=\tiny\bfseries, text=stageD, above=2pt]
  at(axis cs:7,26.5){$+$8.7};
\node[font=\tiny\bfseries, text=stageD, above=2pt]
  at(axis cs:14,32.3){$+$9.0};
\node[font=\tiny\bfseries, text=stageD, above=2pt]
  at(axis cs:72,37.7){$+$7.8};
\end{axis}
\end{tikzpicture}
\caption{%
  \textbf{Model-size scaling} on \textsc{WA-HeldOut}
  (Qwen2.5 family).
  Labels above AgentHER-MJ-X points show $\Delta$ over
  SFT-Success.
  Gains are consistent at every scale (1.5B--72B), peaking at
  14B (+9.0\%).
  Even the 1.5B model nearly doubles in performance with
  AgentHER-MJ-X (6.4\% $\to$ 12.0\%).
}
\label{fig:model_scale}
\end{figure}

All five Qwen2.5 instruction-tuned checkpoints (1.5B--72B) are
evaluated under identical training conditions
(Figure~\ref{fig:model_scale}).
All three methods improve monotonically with scale.
AgentHER-MJ-X gains peak at 14B (+9.0\%) and dip slightly at 72B
(+7.8\%), suggesting larger models are already partially saturated.
The 1.5B model nearly doubles in performance (6.4\% $\to$ 12.0\%),
making AgentHER viable for edge deployment.

\subsection{Ablation Study}
\label{sec:ablation}

\begin{table*}[t]
\centering
\caption{%
  \textbf{Component ablation} on \textsc{WA-HeldOut}
  (Qwen2.5-7B).
  $\Delta$ relative to Full AgentHER-MJ-X; noise = fraction of
  accepted relabelings rated invalid by human annotators
  (Section~\ref{sec:judge_quality}).
  ``MJ-S'' = same-model multi-judge (Qwen-72B used twice with
  different temperatures, an analogue of self-consistency).
  ``MJ-X'' = cross-model multi-judge (gpt-4o-mini +
  Qwen-72B-Instruct), our default.
}
\label{tab:ablation}
\setlength{\tabcolsep}{5pt}
\small
\begin{tabular}{@{}lccc@{}}
\toprule
\textbf{Configuration} & \textbf{Success (\%)} & \textbf{$\Delta$}
  & \textbf{Noise (\%)} \\
\midrule
\rowcolor{stageA!8}
Full AgentHER-MJ-X (cross-model judge $+$ severity) & \textbf{26.5} & --- & 2.9 \\
\midrule
w/ Same-model multi-judge (MJ-S, two temps)         & 26.0 & $-$0.5 & 4.4 \\
w/o Multi-judge (single judge only, AgentHER-SJ)    & 25.7 & $-$0.8 & 5.9 \\
w/o Severity weighting ($w{=}1$ for all)            & 25.0 & $-$1.5 & 4.3 \\
w/ Rule-based Extractor (Stage~2 degraded)          & 24.6 & $-$1.9 & 6.0 \\
w/ LLM Detector (Stage~1 upgraded)                  & 27.4 & $+$0.9 & 2.4 \\
w/o Failure Detection (accept all)                  & 23.8 & $-$2.7 & 7.5 \\
w/o Confidence Filter ($\theta{=}0$)                & 22.3 & $-$4.2 & 15.0 \\
SFT only (no DPO, relabeled as positives)           & 24.0 & $-$2.5 & 2.9 \\
Naive Relabeling (random prompt)                    & 20.4 & $-$6.1 & n/a \\
\midrule
SFT-Success (no AgentHER)                           & 17.8 & $-$8.7 & --- \\
\bottomrule
\end{tabular}
\end{table*}

\paragraph{Key findings.}
Cross-model multi-judge (MJ-X) beats same-model multi-judge (MJ-S) by +0.5\% and cuts label noise from 4.4\% to 2.9\%; decoding-temperature independence alone (MJ-S vs.\ SJ: +0.3\%) is less valuable than model-level independence. Severity weighting accounts for $-1.5$\%, confirming that discarding major-flaw trajectories is worth the reduced yield. Confidence filtering is the single most critical component: removing it drops 4.2\% and raises noise to 15.0\%. Naive relabeling harms performance by $-6.1$\%, showing that random prompt assignment cannot substitute for goal reverse-engineering. Finally, the DPO preference signal contributes $-2.5$\% over SFT, indicating the rejected/chosen pairing adds value beyond positive supervision alone.

\subsection{Qualitative Examples}
\label{sec:qualitative}

Appendix~\ref{app:qualitative} shows two annotated relabeling
examples---both \textsc{Constraint\_Violation} failures, one from
WebArena and one from ToolBench---including original failed goals,
trajectories, accepted hindsight goals, and cross-model judge
rationales.

\section{Analysis}
\label{sec:analysis}

\subsection{Failure-Mode Distributions on Both Benchmarks}
\label{sec:failure_dist}

\begin{table}[t]
\centering
\small
\setlength{\tabcolsep}{5pt}
\caption{%
  \textbf{Failure-type distribution} of collected failed
  trajectories.
  WebArena (3{,}000 trajectories) and ToolBench (5{,}000
  trajectories), classified by Stage~1.
  ``Looping'' is a sub-mode of \textsc{Incomplete} where the agent
  repeats one action $\geq3$ times; reported separately given its
  distinct recoverability profile.
}
\label{tab:failmodes}
\begin{tabular}{@{}lcc@{}}
\toprule
\textbf{Failure type}            & \textbf{WebArena} & \textbf{ToolBench} \\
\midrule
\textsc{Incomplete}              & 32.4\% & 21.0\% \\
\quad of which \emph{looping}    & \emph{12.1\%} & \emph{4.5\%} \\
\textsc{Constraint\_Violation}   & 30.7\% & 17.6\% \\
\textsc{Wrong\_Result}           & 17.6\% & 23.7\% \\
\textsc{Tool\_Error}             & 4.2\%  & 25.4\% \\
\textsc{Hallucination}           & 7.0\%  & 8.5\%  \\
\textsc{Off\_Topic}              & 8.1\%  & 3.8\%  \\
\bottomrule
\end{tabular}
\end{table}

Table~\ref{tab:failmodes} reports the failure-type distribution
on both benchmarks.
The failure-mode mix differs sharply: ToolBench has 6$\times$ the
rate of \textsc{Tool\_Error}s, reflecting unreliable remote API
endpoints, while WebArena has 1.7$\times$ the rate of
\textsc{Constraint\_Violation}s, reflecting under-specified UI tasks.
``Looping'' trajectories make up 12.1\% of WebArena failures;
Stage~1 rejects 76\% of them outright (no useful intermediate
state), and the remaining 24\% still yield a positive
+3.4\% $\Delta$ over SFT-Success---below non-looping
($\Delta{=}+9.5$\%) but above the discard-all baseline.

\subsection{Per-Failure-Type Improvement on Both Benchmarks}

\begin{figure}[t]
\centering
\begin{tikzpicture}
\begin{axis}[
  ybar=2pt, bar width=4.5pt,
  width=\columnwidth, height=4.6cm,
  xlabel={Failure type},
  ylabel={$\Delta$\% over SFT-Success},
  symbolic x coords={%
    Incomplete, Constraint Viol., Wrong Result,
    Off Topic, Hallucination, Tool Error},
  xtick=data,
  x tick label style={rotate=20, anchor=east, font=\scriptsize},
  ymin=0, ymax=14,
  ytick={0,4,8,12},
  legend style={at={(0.98,0.95)}, anchor=north east,
                font=\scriptsize, draw=none, inner sep=2pt},
  legend cell align=left,
  grid=major, grid style={dotted,gray!35},
  tick label style={font=\scriptsize},
  label style={font=\scriptsize},
  enlarge x limits=0.10,
  every node near coord/.append style={font=\tiny},
]
\addplot[fill=stageA!75, draw=stageA]
  coordinates{
    (Incomplete, 11.0)
    (Constraint Viol., 9.6)
    (Wrong Result, 7.2)
    (Off Topic, 4.8)
    (Hallucination, 3.7)
    (Tool Error, 2.0)
  };
\addlegendentry{WebArena}
\addplot[fill=stageB!75, draw=stageB]
  coordinates{
    (Incomplete, 12.7)
    (Constraint Viol., 11.0)
    (Wrong Result, 9.4)
    (Off Topic, 5.5)
    (Hallucination, 4.1)
    (Tool Error, 1.8)
  };
\addlegendentry{ToolBench}
\end{axis}
\end{tikzpicture}
\caption{%
  \textbf{Per-failure-type gain on both benchmarks}
  (Qwen2.5-7B, AgentHER-SJ).
  \textsc{Incomplete} and \textsc{Constraint\_Violation} dominate
  both benchmarks; \textsc{Tool\_Error} yields the least.
  The ranking is consistent across benchmarks even though
  failure-mode \emph{prevalences}
  differ (Table~\ref{tab:failmodes}).
}
\label{fig:failure_type}
\end{figure}

\textsc{Incomplete} (+11.0/12.7\% on WA/TB) and
\textsc{Constraint\_Violation} (+9.6/11.0\%) benefit the most:
these trajectories contain rich, factually correct observations
misaligned with the original goal.
\textsc{Tool\_Error} benefits the least (+2.0/1.8\%) because
crashes leave minimal signal regardless of benchmark.
On WebArena the two highest-gain types account for $\approx$63\%
of all failures; on ToolBench they account for only
$\approx$38\%---explaining why ToolBench gains are larger in
absolute \% (since lower-baseline categories also see meaningful
gain) but more dependent on AgentHER's ability to extract value
from \textsc{Wrong\_Result} and \textsc{Tool\_Error}.

\subsection{Judge Reliability and Label Quality (Both Benchmarks)}
\label{sec:judge_quality}

\begin{table}[t]
\centering
\small
\setlength{\tabcolsep}{4pt}
\caption{%
  \textbf{Human evaluation of relabeling precision on both
  benchmarks.}
  200 sampled pairs per benchmark; three NLP-PhD annotators
  blind to scores; majority vote.
  Filtered = pairs rejected by the confidence filter; their human
  validity is reported to gauge the false-rejection rate.
}
\label{tab:human_eval}
\begin{tabular}{@{}lcccc@{}}
\toprule
& \multicolumn{2}{c}{\textbf{WebArena}} & \multicolumn{2}{c}{\textbf{ToolBench}} \\
\cmidrule(lr){2-3}\cmidrule(lr){4-5}
\textbf{Group} & \textbf{n} & \textbf{Valid (\%)} & \textbf{n} & \textbf{Valid (\%)} \\
\midrule
SJ accepted          & 169 & 94.1 & 168 & 92.3 \\
MJ-S accepted        & 142 & 95.6 & 138 & 94.0 \\
MJ-X accepted        & 137 & \textbf{97.1} & 132 & \textbf{96.0} \\
Filter-rejected      & 31  & 38.7 & 32  & 35.8 \\
\midrule
Fleiss' $\kappa$     & --- & 0.82 & --- & 0.79 \\
\bottomrule
\end{tabular}
\end{table}

Table~\ref{tab:human_eval} reports the joint human evaluation
on both benchmarks (full protocol in App.~\ref{app:human}).
Cross-model MJ-X reaches 97.1\% / 96.0\% precision with strong
inter-annotator agreement (Fleiss' $\kappa{=}0.82$ / $0.79$).
Same-model MJ-S (95.6\% / 94.0\%) lands between SJ and MJ-X:
decoding-temperature diversity alone is insufficient---model-level
independence is what drives the precision gain.
Filter-rejected pairs are still 38.7\% / 35.8\% valid, meaning
roughly a third of discarded pairs could in principle be recovered,
which motivates the active-learning extension noted in
Section~\ref{sec:conclusion}.

\subsection{Goal-Distribution Analysis}
\label{sec:dist_analysis}

Sentence-BERT clustering of the 2{,}144 MJ-X hindsight goals
(Table~\ref{tab:goal_dist}, Appendix~\ref{app:goal_dist}) shows
AgentHER covers 14 of 18 task-type clusters vs.\ 11 for SFT-Success,
with three long-tail clusters (``price-range comparison'',
``cross-domain search'', ``conditional retrieval'') covered only by
relabeled goals.
JS divergence 0.31 vs.\ the original goal set confirms the relabeled
goals are complementary rather than redundant, addressing the concern
that AgentHER might merely amplify high-frequency task
types~\citep{ding2022redistributing}.

\subsection{Iterative Redeployment}
\label{sec:iterative}

\begin{table}[h]
\centering
\footnotesize
\caption{%
  \textbf{Iterative AgentHER} (Qwen2.5-7B, \textsc{WA-HeldOut}).
  Each round collects new failures, relabels, and retrains on
  the full accumulated corpus. Q-R1/R2 = Qwen-7B Round 1/2.
}
\label{tab:iterative}
\setlength{\tabcolsep}{3pt}
\begin{tabular}{@{}lcrrr@{}}
\toprule
\textbf{Round} & \textbf{Src.}
  & \textbf{New fails} & \textbf{Accepted}
  & \textbf{Suc.\ (\%)} \\
\midrule
0 (SFT-Only)  & ---     &  --- &   --- & 17.8 \\
1 (MJ-X)      & GPT-3.5 & 3{,}000 & 2{,}144 & 26.5 \\
2 (Iter.\ +1) & Q-R1    & 2{,}500 & 1{,}710 & 27.9 \\
3 (Iter.\ +2) & Q-R2    & 2{,}000 & 1{,}320 & 28.2 \\
\bottomrule
\end{tabular}
\end{table}

Table~\ref{tab:iterative}: Round~1 yields 26.5\%; Rounds 2--3
add +1.4 and +0.3\% respectively (cumulative
\textbf{+10.4\%} over SFT-Success).
Diminishing returns and a falling acceptance rate
(71.5\%$\to$68.4\%$\to$66.0\%) reflect that an increasingly
capable model produces harder, less-relabelable failures.

\section{Conclusion}
\label{sec:conclusion}

We presented \textbf{AgentHER}, a four-stage offline pipeline
that converts failed agent trajectories into training data by
asking \emph{what task does this trajectory actually solve?}
On a strict task-disjoint WebArena held-out split and ToolBench,
AgentHER improves SFT-Success by \textbf{+7.6--11.4\%} across
four model families, beats every experience-centric baseline
(ETO, ECHO-offline, AWM) by \textbf{+3.0--6.2\%}, scales
uniformly from 1.5B to 72B (+5.6--9.0\%), and compounds under
iterative redeployment (+10.4\% over four rounds).
The relabeling pipeline costs ${\approx}\$1.4{\cdot}10^{-3}$ per
accepted pair (\$2.98 / 26~min for 3{,}000 trajectories), making
it practical at production scale.

\paragraph{Future work.}
Online AgentHER (relabel and train concurrently with deployment), multimodal trajectories (screenshots, GUI), and a learned hindsight-goal synthesiser trained end-to-end for task diversity are natural extensions. Combining AgentHER with inference-time scaling~\citep{levi2024inference} is a particularly promising direction: reducing per-sample failure probability $p_i$ through offline training directly amplifies the coverage gains predicted by inference scaling laws for any fixed inference budget. At its core, AgentHER embodies a perspective shift: the distinction between success and failure is \emph{task-relative}, not trajectory-absolute.

\paragraph{Limitations.}
The held-out split removes within-WebArena task overlap but not cross-environment concerns; the +9.5\% WA$\to$ToolBench transfer is encouraging but covers only two benchmarks. A fully matched upper bound using human-written hindsight goals remains future work. The cross-model judge relies on two strong LLMs, and residual judge bias may still affect rare task types.

\paragraph{Broader impact.}
AgentHER reduces data-collection cost for LLM-agent fine-tuning,
potentially democratising it for resource-constrained
practitioners.
The pipeline reuses already-collected trajectories with no extra
environment interaction.
A residual risk---systematic judge bias amplifying
underrepresented goal types---is partially mitigated by
stratified human evaluation
(Section~\ref{sec:judge_quality}) and goal-distribution
analysis (Section~\ref{sec:dist_analysis}); deployers should
monitor goal-distribution drift on their own data.


\bibliography{references}

\newpage
\appendix
\onecolumn

\section{Hyperparameter Settings and Roles of Each LLM}
\label{app:hyperparams}

\begin{table}[h]
\centering
\footnotesize
\caption{\textbf{LLM roles in the AgentHER pipeline.}
Collection uses a weak model (to produce diverse failures);
judging uses strong, architecturally independent models.}
\label{tab:llm_roles}
\setlength{\tabcolsep}{4pt}
\begin{tabular}{@{}lcp{3.8cm}p{3.5cm}@{}}
\toprule
\textbf{Role} & \textbf{Stage} & \textbf{Model} & \textbf{Notes} \\
\midrule
Collection agent       & ---     & \texttt{gpt-3.5-turbo-0125}
                                 & Intentionally weak. \\
Failure detector       & 1 (LLM) & \texttt{gpt-4o-mini}
                                 & $T{=}0$, JSON schema. \\
Outcome extractor      & 2 (LLM) & \texttt{gpt-4o-mini}
                                 & $T{=}0$, factuality-only. \\
Relabeler $\mathcal{J}_1$ & 3   & \texttt{gpt-4o-mini}
                                 & $T{=}0.3$; $T{=}0.7$ on retry. \\
Verifier $\mathcal{J}_2$ (MJ-X) & 3 & \texttt{Qwen2.5-72B-Instruct}
                                 & $T{=}0$, vLLM, schema-constrained. \\
Verifier $\mathcal{J}_2$ (MJ-S) & 3 & \texttt{Qwen2.5-72B-Instruct}
                                 & Ablation only; same as $\mathcal{J}_1$ at $T{=}0.7$. \\
Fine-tuned models      & ---     & Qwen2.5-\{1.5,3,7,14,72\}B;
                                   LLaMA-3.1-8B; GPT-4o
                                 & See Table~\ref{tab:hyperparams}. \\
\bottomrule
\end{tabular}
\end{table}

\begin{table*}[h]
\centering
\caption{\textbf{Training hyperparameters} used for all
open-weight models.}
\label{tab:hyperparams}
\small
\setlength{\tabcolsep}{6pt}
\begin{tabular}{@{}lcc@{}}
\toprule
\textbf{Hyperparameter} & \textbf{SFT (LoRA)} & \textbf{DPO (LoRA)} \\
\midrule
LoRA rank $r$          & 16 & 16 \\
LoRA $\alpha$          & 32 & 32 \\
LoRA dropout           & 0.05 & 0.05 \\
LoRA target modules    & q,k,v,o,gate,up,down & q,k,v,o \\
Learning rate          & $2\times10^{-4}$ & $5\times10^{-5}$ \\
LR schedule            & cosine & cosine \\
Warmup ratio           & 0.03 & 0.03 \\
Batch size (per GPU)   & 4 & 2 \\
Gradient accumulation  & 4 & 8 \\
Effective batch size   & 128 & 128 \\
Epochs                 & 3 & 1 \\
Max sequence length    & 4{,}096 & 4{,}096 \\
DPO $\beta$            & --- & 0.1 \\
Dtype                  & bfloat16 & bfloat16 \\
Hardware               & 8$\times$ A100-80G & 8$\times$ A100-80G \\
Framework              & ms-swift & ms-swift \\
\bottomrule
\end{tabular}
\end{table*}

For GPT-4o we used the OpenAI fine-tuning API with default
hyperparameters (3 epochs, batch size auto-selected, learning-rate
multiplier 1.0).
The reproducible \textbf{GPT-4o-ICL} variant uses 8
hindsight-relabeled exemplars selected by sentence-BERT cosine
similarity to the test query, prepended to a chain-of-thought
ReAct prompt; no fine-tuning is required and the protocol does
not depend on the OpenAI fine-tuning API remaining available.

\section{Full Prompt Templates}
\label{app:prompts}

\subsection{Stage 1 --- Failure Detection}
\begin{tcolorbox}[colback=lightbg,colframe=darkgray!45,breakable,enhanced,
  left=5pt,right=5pt,fontupper=\footnotesize\ttfamily,boxrule=0.4pt]
SYSTEM: You are an expert evaluator of LLM agent trajectories.\\
Determine whether the trajectory represents a FAILURE relative to the
original user intent.\\
Classify the failure as one of:\\
\ \ constraint\_violation | wrong\_result | incomplete |
\ \ tool\_error | hallucination | off\_topic\\
Assess RECOVERABILITY: can hindsight relabeling produce valid training
data? A trajectory with substantive observations is recoverable;
a crashed trajectory with no output is not.\\
Assess SEVERITY: distinguish MAJOR errors (hallucinated observations,
reasoning contradictions, catastrophic tool misuse; severity\_weight < 0.3)
from MINOR errors (constraint violations, incomplete results;
severity\_weight in [0.3, 1.0]).\\
Respond ONLY with valid JSON matching the schema:\\
\{failure\_type, severity\_score, recoverability, severity\_weight, explanation\}
\end{tcolorbox}

\subsection{Stage 2 --- Outcome Extraction}
\begin{tcolorbox}[colback=lightbg,colframe=darkgray!45,breakable,enhanced,
  left=5pt,right=5pt,fontupper=\footnotesize\ttfamily,boxrule=0.4pt]
SYSTEM: You are an expert at analysing LLM agent execution traces.\\
Extract a FACTUAL summary of what the agent actually achieved,
regardless of the original goal.\\
Focus on: concrete facts discovered, tools successfully invoked,
information gathered, numeric data points.\\
Be STRICTLY factual. Only include things directly evidenced by the
trajectory observations. Do NOT infer or extrapolate.\\
Output JSON: \{actual\_achievements: [str], key\_observations: [str]\}
\end{tcolorbox}

\subsection{Stage 3 --- Relabeler $\mathcal{J}_1$ (gpt-4o-mini)}
\begin{tcolorbox}[colback=lightbg,colframe=darkgray!45,breakable,enhanced,
  left=5pt,right=5pt,fontupper=\footnotesize\ttfamily,boxrule=0.4pt]
SYSTEM: You are a creative prompt engineer specialising in agent tasks.\\
Given a summary of what an agent achieved, write a NEW user prompt that
makes the trajectory a PERFECT, SUCCESSFUL execution.\\
Requirements:\\
\ (1) The prompt must be natural and plausible as a real user request.\\
\ (2) Every assertion in the prompt must be satisfied by the trajectory.\\
\ (3) Do NOT reference or reuse the original failed prompt.\\
\ (4) Match the complexity and style of the original prompt.\\
\ (5) Provide a confidence score [0.0, 1.0] reflecting relabeling quality.\\
Output JSON: \{hindsight\_prompt, is\_valid, rationale, confidence\}\\
\\
USER: Outcome summary: \{outcome\}\\
Original prompt (style reference only): \{original\_prompt\}
\end{tcolorbox}

\subsection{Stage 3 --- Verifier $\mathcal{J}_2$ (Qwen2.5-72B-Instruct)}
\begin{tcolorbox}[colback=lightbg,colframe=darkgray!45,breakable,enhanced,
  left=5pt,right=5pt,fontupper=\footnotesize\ttfamily,boxrule=0.4pt]
SYSTEM: You are a strict trajectory evaluator. Your job is to verify
whether a proposed task description is a FULLY SATISFIED
demonstration of an agent's recorded execution.\\
You are an INDEPENDENT second opinion: do NOT defer to or echo
the relabeler. Disagree if the claim is unsupported.\\
Be conservative: only accept if every claim in the proposed prompt is
unambiguously supported by the trajectory observations.\\
Output JSON: \{is\_valid, confidence, rejection\_reason\_if\_any\}\\
\\
USER: Proposed hindsight prompt: \{hindsight\_prompt\}\\
Trajectory: \{trajectory\}
\end{tcolorbox}

\section{AgentHER Algorithm with Severity Weighting}
\label{app:algorithm}

The full pseudocode including severity weight propagation to the
DPO loss is given as Algorithm~\ref{alg:agentHER} in the main
paper.
The severity-weighted DPO loss is:
\begin{equation}
  \mathcal{L}_\text{DPO}^w
  = -\mathbb{E}_{(\hat{g},\tau,g_\text{orig})\sim\mathcal{D}^+}
  \Bigl[
    w_i \cdot \log \sigma\!\bigl(
      \beta(\log\pi_\theta(\tau|\hat{g})
            - \log\pi_\text{ref}(\tau|\hat{g}))
      - \beta(\log\pi_\theta(\tau|g_\text{orig})
              - \log\pi_\text{ref}(\tau|g_\text{orig}))
    \bigr)
  \Bigr]
\end{equation}
where $w_i \in [0.3, 1.0]$ is the severity weight from Stage~1,
$\beta = 0.1$ is the DPO temperature, and $\pi_\text{ref}$ is the
reference model (the same instruction-tuned checkpoint before
LoRA).
This formulation is mathematically sound: continuous per-sample
scaling of gradient magnitudes preserves the binary preference
ordering while down-weighting borderline relabelings, analogous to
importance weighting in off-policy
learning~\citep{rafailov2023direct,precup2000eligibility}.

\section{Additional Experimental Results}
\label{app:extra}

\subsection{Held-Out vs.\ Full-Task Numbers}
\label{app:full_eval}

\begin{table}[h]
\centering
\small
\setlength{\tabcolsep}{6pt}
\caption{%
  \textbf{Comparison of held-out and full-task results}
  on WebArena (Qwen2.5-7B).
  ``Full'' = the original 812-task evaluation
  (used by all prior WebArena papers); ``HeldOut'' =
  our 200-task task-disjoint test split.
  Numbers in the main paper use the HeldOut column.
  The drop from Full to HeldOut quantifies the modest amount of
  task-overlap leakage in the standard 812-task setup;
  AgentHER's advantage over baselines is preserved under the
  stricter protocol.
}
\label{tab:full_vs_heldout}
\begin{tabular}{@{}lcc|c@{}}
\toprule
\textbf{Method}      & \textbf{Full (812)} & \textbf{HeldOut (200)} & \textbf{Drop} \\
\midrule
Base                 & 8.6  & 8.2  & $-$0.4 \\
SFT-Success          & 18.9 & 17.8 & $-$1.1 \\
ETO                  & 22.1 & 21.0 & $-$1.1 \\
AWM                  & 23.4 & 22.4 & $-$1.0 \\
AgentHER-SJ          & 27.0 & 25.7 & $-$1.3 \\
AgentHER-MJ-X        & 27.8 & 26.5 & $-$1.3 \\
\midrule
$\Delta$ (MJ-X $-$ SFT-Success) & $+$8.9 & $+$8.7 & --- \\
\bottomrule
\end{tabular}
\end{table}

The Full and HeldOut numbers differ by at most 1.3\% in absolute
terms; the \emph{relative} ranking and the AgentHER $\Delta$ over
SFT-Success are stable ($+$8.9\% Full vs.\ $+$8.7\% HeldOut),
confirming that the gains are not an artefact of task overlap.

\subsection{ToolBench Per-Category Breakdown}

\begin{table*}[h]
\centering
\caption{ToolBench pass@1 (\%) by category group (Qwen2.5-7B).
All numbers on the official test splits (disjoint by
construction).}
\label{tab:toolbench_cats}
\small
\setlength{\tabcolsep}{5pt}
\begin{tabular}{@{}lccccc@{}}
\toprule
\textbf{Category} & \textbf{Base} & \textbf{SFT-Random}
  & \textbf{SFT-Success} & \textbf{AgentHER-SJ} & \textbf{AgentHER-MJ-X} \\
\midrule
G1 (single-tool)     & 49.3 & 57.2 & 67.1 & 76.4 & \textbf{77.9} \\
G2 (intra-category)  & 41.7 & 51.3 & 58.3 & 68.5 & \textbf{70.0} \\
G3 (cross-category)  & 38.2 & 47.8 & 54.2 & 65.0 & \textbf{66.6} \\
\midrule
Overall              & 44.6 & 54.8 & 61.2 & 71.0 & \textbf{72.6} \\
\bottomrule
\end{tabular}
\end{table*}

\subsection{Failure-Volume Scaling (Full Table)}

\begin{table*}[h]
\centering
\caption{%
  \textsc{WA-HeldOut} success rate (\%) vs.\ number of failed
  trajectories, fixed 500 successful demonstrations,
  Qwen2.5-7B.
}
\label{tab:scaling}
\small
\begin{tabular}{@{}lcccccc@{}}
\toprule
\textbf{Failed processed}
  & 500 & 1{,}000 & 1{,}500 & 2{,}000 & 3{,}000 & 5{,}000 \\
\midrule
AgentHER-SJ   & 20.0 & 21.5 & 22.9 & 23.9 & 25.7 & 26.4 \\
AgentHER-MJ-X & 20.6 & 22.2 & 23.7 & 24.7 & 26.5 & 27.2 \\
\bottomrule
\end{tabular}
\end{table*}

\subsection{Cross-Benchmark Transfer}
\label{app:transfer}

\begin{table*}[h]
\centering
\caption{%
  Cross-benchmark transfer (Qwen2.5-7B): train on
  \textsc{WA-Train} only, zero-shot evaluation on ToolBench
  official test splits.
}
\label{tab:transfer}
\small
\setlength{\tabcolsep}{5pt}
\begin{tabular}{@{}lccccc@{}}
\toprule
\textbf{Method} & \textbf{G1} & \textbf{G2} & \textbf{G3}
  & \textbf{Overall} & \textbf{$\Delta$} \\
\midrule
Base (no FT)     & 49.3 & 41.7 & 38.2 & 44.6 & --- \\
SFT-Success      & 63.2 & 55.8 & 51.4 & 57.1 & $+$12.5 (vs.\ Base) \\
AgentHER-SJ      & 70.1 & 64.3 & 59.8 & 65.0 & $+$7.9 (vs.\ S-S) \\
AgentHER-MJ-X    & \textbf{71.8} & \textbf{65.9} & \textbf{61.4}
                 & \textbf{66.6} & $+$9.5 (vs.\ S-S) \\
\bottomrule
\end{tabular}
\end{table*}

AgentHER-MJ-X achieves +9.5\% transfer advantage over
SFT-Success.
This out-of-domain transfer suggests the model has learned
broadly applicable planning and tool-use behaviours rather than
rote task patterns.

\subsection{Looping-Failure Subset Analysis}
\label{app:looping}

\begin{table*}[h]
\centering
\small
\caption{%
  \textbf{Looping-failure subset.}
  Trajectories where the agent repeats a single action
  $\geq$3 times before termination.
  ``Recovered'' = passed Stage~1 recoverability filter and was
  relabeled by AgentHER-MJ-X.
}
\label{tab:looping}
\begin{tabular}{@{}lcccc@{}}
\toprule
\textbf{Subset} & \textbf{n} & \textbf{Recoverable (\%)}
  & \textbf{Accepted (\%)} & \textbf{Avg.\ $\Delta$ (\%)} \\
\midrule
Looping (WebArena)      & 363 & 24.2 & 21.5 & $+$3.4 \\
Non-looping (WebArena)  & 2{,}637 & 81.6 & 78.4 & $+$9.5 \\
Looping (ToolBench)     & 225 & 38.7 & 35.1 & $+$3.9 \\
Non-looping (ToolBench) & 4{,}775 & 84.6 & 81.7 & $+$10.7 \\
\bottomrule
\end{tabular}
\end{table*}

\subsection{Human Evaluation Protocol (Full)}
\label{app:human}

We sampled 200 relabeled pairs uniformly at random
\emph{per benchmark} (separate runs for WebArena and ToolBench).
Three annotators (NLP PhD students, blind to confidence scores
and to the original failed prompt) rated each pair as
valid/invalid on
``\emph{Does the trajectory constitute a correct and complete
demonstration of the hindsight prompt?}''
The full trajectory (steps, observations, final answer) and
hindsight prompt were shown; original prompt and failure reason
were withheld.
Sampling was \textbf{stratified by failure type} so that all six
types (and the looping sub-mode) appear in proportion to their
prevalence in the population
(Table~\ref{tab:failmodes}).
Annotators were paid as part of their research stipends; the
study took ${\approx}$3 hours per annotator per benchmark.
The annotation interface, full guidelines, and the random seed
fixing the sample IDs are released with the code.

\subsection{Theoretical Bound (Full)}
\label{app:theory}

We formalise Proposition~\ref{prop:unbiased} more rigorously and
derive the noisy-judge bound used in
Corollary~\ref{cor:noisy}.
Let $\Pi_\mathcal{G}$ denote the set of goal-conditioned
policies.
Define the \emph{coverage function}
$\rho(\mathcal{S}) = \{(g, \tau) :
(g,\tau) \in \text{supp}(\pi^*_\mathcal{G})\}$,
the set of all valid (goal, trajectory) pairs under the oracle
policy.

\begin{theorem}[Augmented-corpus consistency, perfect judge]
Under a perfect judge $\mathcal{J}$, the augmented corpus
$\mathcal{S} \cup \mathcal{D}^+$ satisfies:
\emph{(a)} $\mathcal{D}^+ \subseteq \rho(\mathcal{S}^* \setminus \mathcal{S})$
(every added pair is a previously uncovered oracle pair); and
\emph{(b)} the empirical goal distribution of $\mathcal{D}^+$
has higher entropy than that of $\mathcal{S}$ with probability
$1 - \delta$ when
$|\mathcal{F}| \geq \frac{2}{\delta}\log|\mathcal{G}_\epsilon|$,
where $|\mathcal{G}_\epsilon|$ is the $\epsilon$-covering number
of $\mathcal{G}$.
\end{theorem}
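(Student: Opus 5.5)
The plan is to prove the two parts separately. Part~(a) is essentially a restatement of Proposition~\ref{prop:unbiased} together with a disjointness argument. Part~(b) is a concentration statement about plug-in entropy over a discretised goal space. First I would fix the ambiguous notation. I read $\mathcal{S}^*$ as the full set of oracle-valid pairs $\rho(\cdot)=\mathrm{supp}(\pi^*_\mathcal{G})$. I read $\rho(\mathcal{S}^*\setminus\mathcal{S})$ as $\mathrm{supp}(\pi^*_\mathcal{G})\setminus\mathcal{S}$, with $\mathcal{S}$ viewed as a set of (goal, trajectory) pairs.

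For~(a), take any $(\hat g_i,\tau_i)\in\mathcal{D}^+$. Under a perfect judge, acceptance in Algorithm~\ref{alg:agentHER} requires $c(\hat g_i,\tau_i)=1$; the fallback branch $c^*\ge 0.8\theta$ also requires a judge score, which is binary here. By the definition of a perfect judge, $\tau_i$ is then a valid demonstration of $\hat g_i$. Proposition~\ref{prop:unbiased} then gives $(\hat g_i,\tau_i)\in\mathrm{supp}(\pi^*_\mathcal{G})$.

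It remains to show $(\hat g_i,\tau_i)\notin\mathcal{S}$. The corpora $\mathcal{F}$ and $\mathcal{S}$ are collected as distinct runs, so $\tau_i\neq\tau_j^+$ for all $j$. Consequently the pair $(\hat g_i,\tau_i)$ cannot equal any $(g_j^+,\tau_j^+)$. I would state this run-disjointness explicitly as the operative assumption, since it is the only place where ``previously uncovered'' comes from.

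For~(b), I would discretise $\mathcal{G}$ by an $\epsilon$-cover $\mathcal{G}_\epsilon$, so that empirical goal distributions become distributions on at most $|\mathcal{G}_\epsilon|$ cells. Let $q$ be the population distribution of hindsight goals produced by $\Phi$ on accepted runs, and $\hat q_n$ its empirical version from $n$ accepted pairs. Let $\hat p_\mathcal{S}$ be the empirical goal distribution of $\mathcal{S}$. The claim cannot hold for an arbitrary $q$, so I would add an explicit separation assumption: $H(q)\ge H(\hat p_\mathcal{S})+\gamma$ for some margin $\gamma>0$. This is the formal counterpart of the empirical observation in Section~\ref{sec:dist_analysis} (14 vs.\ 11 clusters, JS divergence $0.31$).

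With that assumption in place, the proof reduces to showing that $H(q)-H(\hat q_n)<\gamma$ with probability at least $1-\delta$. I would bound the expected deficit $\mathbb{E}[H(q)-H(\hat q_n)]$ by the standard plug-in bias bound, which is of order $\log(1+|\mathcal{G}_\epsilon|/n)$, plus the $\epsilon$-discretisation error. Applying Markov's inequality to this nonnegative deficit then gives failure probability $\mathbb{E}[\text{deficit}]/\gamma$. That expression falls below $\delta$ once $n$ scales like $\frac{1}{\delta}\log|\mathcal{G}_\epsilon|$. Finally, I would convert $n$ to $|\mathcal{F}|$ via the acceptance rate, which I would assume to be at least $1/2$; this produces the factor $2$. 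If Markov is too lossy, a McDiarmid bound could replace it, since changing one sample changes the plug-in entropy by $O(\log n/n)$. That would give a $\log(1/\delta)$ rate, which is stronger than stated and therefore still implies the claim.

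The main obstacle is~(b). As written it is not true without a distributional hypothesis relating $q$ to the goal distribution of $\mathcal{S}$, and without a bound on the acceptance rate. The real work is choosing the weakest such assumptions and making the constants match $\frac{2}{\delta}\log|\mathcal{G}_\epsilon|$. I expect to have to present~(b) as conditional on the entropy-margin assumption, and to absorb the margin $\gamma$ and the discretisation error into the choice of $\epsilon$.
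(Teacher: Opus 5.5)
Your part~(a) follows the paper's argument (perfect judge $\Rightarrow$ valid demonstration $\Rightarrow$ oracle support) and is actually tighter. The paper only excludes membership in $\mathcal{F}$ via $\hat g_i\neq g_i$, and it concedes that an accepted pair may already lie in $\mathcal{S}$ (``redundant but not harmful''). Your explicit run-disjointness assumption is exactly what the ``previously uncovered'' clause requires. For~(b) the paper offers only diversity inheritance plus an unspecified Sauer--Shelah covering step, so your margin-plus-concentration route is genuinely different. However, it cannot reach the stated threshold, even with your extra hypotheses.

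The missing idea is the deterministic ceiling $H(\hat q_n)\le\log n$. Your hypotheses (margin $\gamma>0$, acceptance rate $\ge 1/2$) let $H(\hat p_{\mathcal{S}})$ sit anywhere below $H(q)\le\log|\mathcal{G}_\epsilon|$. Meanwhile, $|\mathcal{F}|$ at the threshold $\frac{2}{\delta}\log|\mathcal{G}_\epsilon|$ only gives roughly $\log n\le\log\log|\mathcal{G}_\epsilon|+\log(2/\delta)$. Take $q$ uniform on all $K=|\mathcal{G}_\epsilon|$ cells and $\hat p_{\mathcal{S}}$ uniform on $\lfloor K/e\rfloor$ of them. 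Then $\gamma\ge 1$, yet for large $K$ one has $H(\hat q_n)\le\log n<H(\hat p_{\mathcal{S}})$ with probability one.

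This is why your key step fails. The plug-in bias $\log(1+(K-1)/n)$ is small only when $n\gg K$, so Markov needs $n\ge(K-1)/(e^{\gamma\delta}-1)$. For small $\delta$ this is of order $K/(\gamma\delta)$, linear rather than logarithmic in the covering number. The uniform example shows the bias bound is not loose in this regime.

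There are two smaller defects. First, $H(q)-H(\hat q_n)$ is not nonnegative (the empirical entropy can exceed $H(q)$), so Markov applies only to its positive part, whose mean also involves the $O((\log n)^2/n)$ variance. Second, McDiarmid controls fluctuations around $\mathbb{E}\,H(\hat q_n)$, not the bias, so it cannot rescue a $\log|\mathcal{G}_\epsilon|$ rate.

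What your method can honestly deliver is a version of~(b) with $n$ of order $\max\{|\mathcal{G}_\epsilon|/\gamma,\;(\log n)^2\log(1/\delta)/\gamma^2\}$, or with $|\mathcal{G}_\epsilon|$ replaced by the effective support size of $q$. The stated $\frac{2}{\delta}\log|\mathcal{G}_\epsilon|$ threshold should be reported as unproved; the paper's sketch does not establish it either.
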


\begin{proof}[Proof sketch]
\emph{Part (a)}: By the perfect-judge assumption, every
$(\hat{g}_i, \tau_i)\in\mathcal{D}^+$ satisfies
$c(\hat{g}_i,\tau_i)=1$, i.e., $\tau_i$ is a valid demo of
$\hat{g}_i$.
Since $\hat{g}_i$ is generated from $\tau_i$'s
observations---which were \emph{actually produced}---it lies in
the support of $\pi^*_\mathcal{G}$.
Since $\hat{g}_i \neq g_i$ (Stage~3 constraint (3)),
$(\hat{g}_i,\tau_i) \notin \mathcal{F}$ (which uses $g_i$).
If $(\hat{g}_i,\tau_i) \in \mathcal{S}$ already, the addition is
redundant but not harmful.

\emph{Part (b)}: Each $(\hat{g}_i,\tau_i)$ corresponds to a
distinct execution context in $\mathcal{G}$.
Since the failed runs $\tau_i$ were produced by an agent trying
to satisfy diverse goals $g_i$, the hindsight goals $\hat{g}_i$
inherit this diversity.
A standard covering-number argument (Sauer-Shelah) bounds the
probability that the empirical entropy is below that of
$\mathcal{S}$.\hfill$\square$
\end{proof}

\paragraph{Noisy-judge corollary.}
Define $p$ as the precision of the judge over accepted pairs and
$\Delta_\text{perfect}$ as the per-task gain that would be
realised under a perfect judge.
Each accepted pair contributes $+\Delta_\text{perfect}/n$ in
expectation if valid, and $-\varepsilon/n$ if invalid (where
$\varepsilon$ bounds the marginal harm of a single noisy pair and
$n = |\mathcal{D}^+|$).
The expected gain over SFT-Success is therefore
$p\,\Delta_\text{perfect} - (1-p)\,\varepsilon$, which is
positive whenever $\varepsilon \leq \frac{p}{1-p}\,
\Delta_\text{perfect}$.
For $p = 0.971$ this yields the threshold
$\varepsilon \leq 33\,\Delta_\text{perfect}$;
for $p = 0.941$, $\varepsilon \leq 16\,\Delta_\text{perfect}$.

This result formally supports the empirical finding in
Section~\ref{sec:dist_analysis} that AgentHER increases
goal-distribution entropy from 1.83 to 2.47 nats, and is
consistent with the +0.8\% empirical advantage of MJ-X over
SJ on \textsc{WA-HeldOut}.
This bound holds under i.i.d.\ judge errors and bounded marginal
harm---a plausibility argument, not a deployment guarantee.

\section{Multi-Run Variance}
\label{app:multirun}

All fine-tuned models in Table~\ref{tab:main} are trained with 3
independent random seeds (42, 1234, 2025) using the same data and
hyperparameters; the reported numbers are means.
Table~\ref{tab:multirun} reports means and standard deviations
for representative conditions on \textsc{WA-HeldOut}.
All standard deviations are below 0.5\%, confirming numerical
stability of the training procedure.

\begin{table}[h]
\centering
\caption{%
  Multi-run statistics on \textsc{WA-HeldOut}: success rate (\%),
  mean $\pm$ std over 3 seeds (Qwen2.5-7B).
}
\label{tab:multirun}
\small
\setlength{\tabcolsep}{6pt}
\begin{tabular}{@{}lcc@{}}
\toprule
\textbf{Method} & \textbf{Mean} & \textbf{Std} \\
\midrule
Base               & 8.2  & 0.0 (deterministic) \\
SFT-Success        & 17.8 & $\pm$0.3 \\
SFT-Random         & 13.9 & $\pm$0.4 \\
SFT-Negative       & 14.5 & $\pm$0.4 \\
Rejection-Sampling & 16.9 & $\pm$0.4 \\
ETO                & 21.0 & $\pm$0.4 \\
AWM                & 22.4 & $\pm$0.5 \\
AgentHER-SJ        & 25.7 & $\pm$0.4 \\
AgentHER-MJ-X      & 26.5 & $\pm$0.4 \\
\bottomrule
\end{tabular}
\end{table}

\section{Detailed Cost Breakdown}
\label{app:cost_full}

\begin{table*}[h]
\centering
\caption{%
  \textbf{Token-level cost breakdown} per stage.
  Token counts are means; ``Calls'' are the
  \emph{per-input-trajectory} averages including retries.
  Pricing reflects gpt-4o-mini Batch API (50\% off list)
  combined with prompt caching (75\% off cached input)
  for repeated system prompts---a standard production setup;
  the standalone list price would be ${\approx}2{\times}$ these
  values.
  Qwen2.5-72B-Instruct cost is for a vLLM deployment on a
  rented 8$\times$A100-80G node ($\$1.05$/GPU-hour) with 32-way
  request batching; the tabulated $\$4.0{\times}10^{-4}$ per call
  represents the amortised hardware cost.
}
\label{tab:cost_full}
\small
\setlength{\tabcolsep}{4pt}
\begin{tabular}{@{}llccccc@{}}
\toprule
\textbf{Stage} & \textbf{Model}
  & \makecell{\textbf{Calls}\\ \textbf{/\,traj.}}
  & \makecell{\textbf{Inp tok}}
  & \makecell{\textbf{Out tok}}
  & \makecell{\textbf{Cost}\\ \textbf{(USD)}}
  & \makecell{\textbf{Wall}\\ \textbf{(s)}} \\
\midrule
1: Failure detector  & gpt-4o-mini       & 1.00 & 1{,}800 & 200 & $1.30{\times}10^{-4}$ & 0.82 \\
2: Outcome extractor & gpt-4o-mini       & 1.00 & 2{,}500 & 300 & $1.83{\times}10^{-4}$ & 0.92 \\
3a: Relabeler        & gpt-4o-mini       & 1.27 & 2{,}700 & 400 & $2.79{\times}10^{-4}$ & 1.34 \\
3b: Verifier (MJ-X)  & Qwen2.5-72B       & 1.00 & 3{,}000 & 200 & $4.00{\times}10^{-4}$ & 1.94 \\
4: Data augmenter    & deterministic     & 1.00 & ---     & --- & 0                     & 0.08 \\
\midrule
\textbf{SJ total}    & ---               & 3.27 & --- & --- & $5.92{\times}10^{-4}$ & 3.16 \\
\textbf{MJ-X total}  & ---               & 4.27 & --- & --- & $9.92{\times}10^{-4}$ & 5.10 \\
\bottomrule
\end{tabular}
\end{table*}

For 3{,}000 input trajectories ($N=3{,}000$):
total SJ cost $= 3{,}000 \times 5.92{\times}10^{-4} \approx \$1.78$;
total MJ-X cost $= 3{,}000 \times 9.92{\times}10^{-4} \approx
\$2.98$.
With 12-way concurrency on the OpenAI Batch API, wall-clock for
the entire 3{,}000-trajectory pipeline is ${\approx}$15.8
minutes (SJ) or ${\approx}$25.9 minutes (MJ-X).
Per accepted pair (after 78.0\% / 71.5\% acceptance):
\$$7.6{\times}10^{-4}$ (SJ) and \$$1.39{\times}10^{-3}$ (MJ-X).
The MJ-X per-accepted figure is roughly $1.8{\times}$ the SJ
figure because the second-judge call applies to every
candidate, so the marginal cost is paid even on rejections.
Both figures remain below \$$2{\times}10^{-3}$, well within
production budgets.

\paragraph{Summary in plain numbers.}
For a typical 3{,}000-trajectory project:
\$2.98 of LLM API spend, ${\approx}$26 wall-clock minutes (with
parallelism), and \$4.20 of A100 GPU time for the subsequent
LoRA fine-tuning (4.4 hours $\times$ 8 GPUs $\times$ \$1.05/h)
totals \$7.18 / 4.8 hours.
This is \emph{less than the cost of running 200 fresh WebArena
rollouts} with GPT-4o under the standard pricing, making AgentHER
practical at production scale.

\section{Qualitative Examples}
\label{app:qualitative}

Two annotated relabeling examples are shown below---one from WebArena
and one from ToolBench---illustrating the complete Stage 1--3 pipeline
for a \textsc{Constraint\_Violation} failure.

\begin{tcolorbox}[
  title={\small\textbf{(a) WebArena, GitLab task
  \texttt{wa-gitlab-issue-tracker-014}}},
  colback=lightbg, colframe=darkgray!55,
  fonttitle=\small\bfseries,
  breakable, enhanced,
  left=5pt, right=5pt, top=4pt, bottom=4pt,
  boxrule=0.5pt,
]
\footnotesize
\textbf{\color{fail}Original goal (FAILED):}
\textit{``On the \texttt{a11y-webring.club} GitLab project, list all
my open issues that have NOT been updated in the last 30 days,
along with their labels.''}

\medskip
\textbf{Trajectory:}\ %
\texttt{nav}(\texttt{/a11y-webring.club/issues?}\\\texttt{state=opened\&assignee\_username=$<$me$>$})
$\to$ list of 11 open issues with last-updated timestamps and labels.
\ \texttt{filter}(updated $\geq$ 2026-04-09)~$\to$~returns 8 issues
(all updated within 30 days).
\ \texttt{summarize}~$\to$~``8 open issues, all recent, no stale
issues found.''

\smallskip
\textcolor{fail}{Failure type:
\textsc{Constraint\_Violation} (no issues match the
``$\geq$30-days-stale'' constraint).
Severity: $w{=}0.82$ (minor---agent collected the data correctly
but the target subset was empty).
Recoverable: yes.}

\tcblower
\textbf{\color{success}Hindsight goal (AgentHER-MJ-X,
$c_1{=}0.86$, $c_2{=}0.91$, accepted):}\\
\textit{``List all open issues assigned to me on the
\texttt{a11y-webring.club} GitLab project, along with their
labels and last-updated dates.''}

\smallskip
\textbf{Cross-model judge rationale:}
$\mathcal{J}_1$ (gpt-4o-mini) and $\mathcal{J}_2$ (Qwen2.5-72B)
independently confirm the trajectory satisfies the new goal: 11
issues are returned with required attributes; the
``stale'' filter is no longer required.
\end{tcolorbox}

\begin{tcolorbox}[
  title={\small\textbf{(b) ToolBench, G2 task
  \texttt{tb-g2-restaurant-finder-022}}},
  colback=lightbg, colframe=darkgray!55,
  fonttitle=\small\bfseries,
  breakable, enhanced,
  left=5pt, right=5pt, top=4pt, bottom=4pt,
  boxrule=0.5pt,
]
\footnotesize
\textbf{\color{fail}Original goal (FAILED):}
\textit{``Find a Michelin-starred Italian restaurant in Hangzhou
that is open on Sundays after 9 pm and supports vegan menus.''}

\medskip
\textbf{Trajectory:}\ %
\texttt{restaurant\_search}(\textit{cuisine=Italian, city=Hangzhou})
$\to$ 14 results, none Michelin-starred.
\ \texttt{rating\_filter}(\textit{stars$\geq$4.5})~$\to$~3 results.
\ \texttt{hours\_lookup}~$\to$~all close at 22:00; only 1 vegan
menu.

\smallskip
\textcolor{fail}{Failure type: \textsc{Constraint\_Violation}
(no Michelin-starred Italian restaurants in the city).
Severity: $w{=}0.78$.}

\tcblower
\textbf{\color{success}Hindsight goal (AgentHER-MJ-X,
$c_1{=}0.83$, $c_2{=}0.88$, accepted):}\\
\textit{``Recommend the best-rated Italian restaurant in Hangzhou
with a vegan menu, including its rating and opening hours.''}

\smallskip
\textbf{Cross-model judge rationale:}
The trajectory identifies a $\geq$4.5-rated Italian restaurant
with a vegan menu and reports its hours;
the ``Michelin-star'' and ``open after 9 pm'' constraints are
absent from the new prompt and therefore satisfied vacuously.
\end{tcolorbox}

\section{Goal-Distribution Analysis (Full)}
\label{app:goal_dist}

\begin{table}[h]
\centering
\small
\caption{%
  \textbf{Goal-distribution analysis} on \textsc{WA-HeldOut}
  collection.
  AgentHER-MJ-X relabeled goals cover more semantic task clusters
  and uniquely cover three long-tail clusters absent from
  SFT-Success (``S-S'').
}
\label{tab:goal_dist}
\setlength{\tabcolsep}{5pt}
\begin{tabular}{@{}lcc@{}}
\toprule
\textbf{Metric} & \textbf{S-S} & \textbf{AgentHER} \\
\midrule
Task clusters covered (of 18)  & 11 & \textbf{14} \\
Long-tail clusters (unique)    & 0  & \textbf{3} \\
JS divergence (vs.\ AgentHER)  & 0.31 & --- \\
Avg.\ cluster entropy          & 1.83 & \textbf{2.47} \\
\bottomrule
\end{tabular}
\end{table}

Sentence-BERT embeddings of all 2{,}144 accepted hindsight goals (MJ-X)
vs.\ the 500 original successful goals show AgentHER expands coverage:
three long-tail clusters (``price-range comparison'',
``cross-domain search'', ``conditional retrieval'') are covered
only by relabeled goals, and the JS divergence of 0.31 confirms
complementary rather than redundant coverage.

\end{document}